\documentclass{article}

\usepackage{microtype}
\usepackage{graphicx}
\usepackage{subcaption}
\usepackage{booktabs} 
\usepackage{algorithm}
\usepackage{algorithmic}
\usepackage{graphicx}
\usepackage{overpic}

\usepackage{hyperref}

\usepackage[preprint]{icml2026}

\usepackage{amsmath}
\usepackage{amssymb}
\usepackage{mathtools}
\usepackage{amsthm}

\usepackage[capitalize,noabbrev]{cleveref}

\theoremstyle{plain}
\newtheorem{theorem}{Theorem}[section]

\newtheorem{lemma}[theorem]{Lemma}
\newtheorem{corollary}[theorem]{Corollary}
\theoremstyle{definition}

\newtheorem{assumption}[theorem]{Assumption}
\theoremstyle{remark}

\usepackage[textsize=tiny]{todonotes}

\newcommand{\R}{\mathbb{R}}
\newcommand{\E}{\mathbb{E}}

\newcommand{\Var}{\mathrm{Var}}
\newcommand{\Cov}{\mathrm{Cov}}

\newcommand{\norm}[1]{\left\lVert#1\right\rVert}

\icmltitlerunning{Gaussian Process Bandit Optimization with Machine Learning Predictions}

\begin{document}

\twocolumn[
  \icmltitle{Gaussian Process Bandit Optimization with Machine Learning Predictions\\ and Application to Hypothesis Generation}



  \icmlsetsymbol{equal}{*}

  \begin{icmlauthorlist}
    \icmlauthor{ Xin Jennifer Chen}{equal,yyy}
    \icmlauthor{Yunjin Tong}{equal,sch}
  
  \end{icmlauthorlist}

  \icmlaffiliation{yyy}{Department of Management Science and Engineering, Stanford University, Stanford, USA}
  \icmlaffiliation{sch}{Graduate School of Business, Stanford University, Stanford, USA}

  \icmlcorrespondingauthor{Yunjin Tong}{yjtong@stanford.edu}

  \icmlkeywords{Gaussian Process, Machine Learning}

  \vskip 0.3in
]



\printAffiliationsAndNotice{\icmlEqualContribution (in alphabetical order).}

\begin{abstract}
Many real-world optimization problems involve an expensive ground-truth oracle
(e.g., human evaluation, physical experiments) and a cheap, low-fidelity prediction oracle
(e.g., machine learning models, simulations).
Meanwhile, abundant offline data (e.g., past experiments and predictions) are often
available and can be used to pretrain powerful predictive models, as well as to provide
an informative prior.
We propose Prediction-Augmented Gaussian Process Upper Confidence Bound (PA-GP-UCB),
a novel Bayesian optimization algorithm that leverages both oracles and offline data to achieve provable gains in sample efficiency for the ground-truth oracle queries.
PA-GP-UCB employs a control-variates estimator derived from a joint Gaussian process
posterior to correct prediction bias and reduce uncertainty.
We prove that PA-GP-UCB preserves the standard regret rate of GP-UCB while achieving a
strictly smaller leading constant that is explicitly controlled by prediction quality
and offline data coverage.
Empirically, PA-GP-UCB converges faster than Vanilla GP-UCB and na\"{\i}ve prediction-augmented GP-UCB baselines on synthetic benchmarks and
on a real-world hypothesis evaluation task grounded in human behavioral data, where
predictions are provided by large language models. These results establish PA-GP-UCB as a general and
sample-efficient framework for hypothesis generation under expensive feedback.
 
\end{abstract}


\section{Introduction}
Many real-world optimization problems, ranging from scientific discovery and engineering design to creative content generation, require exploring high-dimensional, continuous spaces to find inputs that maximize an unknown objective function \cite{frazier2018tutorialbayesianoptimization,shahriari2015taking,NIPS2012_05311655}. It is common in these settings to have the presence of two complementary information sources: an expensive, high-fidelity oracle (e.g., human evaluation, physical experiment, or high-accuracy simulation) that provides accurate but costly feedback, and a cheap, low-fidelity prediction (e.g., a machine learning model, coarse simulation, large language models (LLMs) or analytical approximation) that gives fast but potentially biased estimates \cite{forrester2009recent,  kandasamy2016gaussian,doi:10.1137/16M1082469}. The low-fidelity prediction oracle is especially relevant given the rise in the capability of LLMs \cite{ea2020language,kaplan2020scaling}.   

Meanwhile, in many applications, abundant offline data, collected from past experiments, historical records, or under different experimental conditions, can be leveraged to pretrain predictive models and construct informative priors \cite{bai2023transfer}. This has renewed interest in hybrid pipelines where models guide search but high-fidelity evaluation remains the decision bottleneck. More recently, researchers have begun using AI systems to prioritize hypotheses for follow-up study \cite{ludwig2024machine}, and labs report using LLMs as lightweight prediction oracles in scientific workflows \cite{anthropic2026accelerating}. On the other hand, most practitioners still rely on high-fidelity oracles to ensure the reliability of their optimization results \cite{diessner2022investigating, brochu2010tutorial}. The key question is how to optimally combine online expensive feedback with both an inexpensive prediction and existing offline data to efficiently navigate the search space and converge to the optimum with minimal use of the expensive oracle \cite{wang2024pre, foumani2023multi}. This problem fundamentally calls for a Bayesian sequential decision-making framework that can formalize prior knowledge, propagate model uncertainty, and support provable exploration–exploitation trade-offs under costly evaluations.

Gaussian process (GP) bandit algorithms, such as Gaussian Process Upper Confidence Bound (Vanilla GP-UCB) \cite{Srinivas2010}, provide such a framework for global optimization of unknown functions in continuous domains \cite{williams2006gaussian}. By modeling the objective with a GP and sequentially selecting query points that balance exploration and exploitation, Vanilla GP-UCB achieves sublinear cumulative regret with high probability. However, Vanilla GP-UCB relies solely on the expensive online oracle and does not leverage cheap prediction information or offline data that may be available in many applications. This limitation motivates the development of methods that can incorporate prediction data and offline observations to accelerate optimization while maintaining theoretical guarantees.

Meanwhile, a complementary line of work explores how to leverage biased but informative
predictions in statistical inference and decision-making.
Prediction-Powered Inference (PPI) \cite{angelopoulos2023prediction} provides a general
framework for incorporating offline-trained predictive models via bias correction using
limited ground-truth data, but it is designed for static inference rather than sequential
learning. A follow-up work by \cite{ji2025multiarmedbanditsmachinelearninggenerated} extends this
idea to sequential decision-making, where the Machine Learning-Assisted Upper Confidence Bound (MLA-UCB) algorithm uses machine-learned predictions to accelerate exploration
in discrete multi-armed bandits. However, these approaches are restricted to discrete action spaces and do not exploit
structural relationships between arms, nor do they address how to jointly combine
predictions, offline data, and expensive online feedback in continuous domains,
such as those arising in hypothesis generation tasks.

We propose Prediction-Augmented Gaussian Process Upper Confidence Bound (PA-GP-UCB),
a novel algorithm that extends the GP-UCB framework to settings with both an expensive
ground-truth oracle and a cheap, low-fidelity prediction oracle, together with offline
data.
To leverage the statistical relationship between these information sources,
PA-GP-UCB models the high-fidelity ground-truth function and the low-fidelity prediction
as correlated tasks under a multi-task Gaussian process prior. The algorithm proceeds in two stages.
In the offline stage, PA-GP-UCB acquires low-cost prediction oracles over a uniform,
space-filling design, using existing offline data or by making queries to the prediction oracle.
In the online stage, the algorithm sequentially queries the expensive oracle while obtaining corresponding low-fidelity predictions at the same query points.
At each iteration, PA-GP-UCB integrates low-fidelity predictions with high-fidelity
observations through a control-variates estimator, enabling variance reduction and bias
correction.
The next query point is then selected by maximizing an upper confidence bound derived
from this fused estimator. In contrast to classical multi-fidelity bandit formulations \cite{Forrester2007Multifidelity,kandasamy2016gaussian},
we treat predictions as effectively free side information rather than alternative query
fidelities.

The PA-GP-UCB method is general and applicable to continuous optimization problems
involving two information sources with different costs and fidelities.
While PA-GP-UCB can naturally leverage offline data when such data are available, it does
not strictly rely on historical datasets; in the absence of prior data, offline
information can instead be generated using predictive sources such as learned models or LLMs. Our main contributions are as follows:
\begin{itemize}
\item We introduce PA-GP-UCB, a Gaussian process bandit algorithm that leverages
expensive and cheap oracles together with offline data via a novel control-variates
estimator, enabling principled variance reduction and bias correction.

\item We provide a theoretical analysis of PA-GP-UCB, showing that it achieves a
cumulative regret bound that improves upon Vanilla GP-UCB by a constant factor
explicitly controlled by the prediction correlation and offline data coverage, and
empirically validate these gains on synthetic Gaussian process benchmarks.

\item We present PA-GP-UCB as a general framework for sample-efficient hypothesis
generation under expensive feedback, and demonstrate its effectiveness on a real-world
hypothesis evaluation task grounded in human behavioral data, where it efficiently
uncovers high-quality hypotheses using predictions from LLMs.
\end{itemize}

PA-GP-UCB provides a novel framework for sample-efficient hypothesis generation in settings with expensive feedback.
In this setting, the expensive oracle corresponds to human evaluation or physical experimentation, while cheap, low-fidelity predictions are provided by LLMs \cite{Zhou_2024,Forrester2007Multifidelity}.
By maximizing the upper confidence bound of a bias-corrected posterior, PA-GP-UCB naturally proposes novel candidate hypotheses while improving the efficiency of hypothesis evaluation, enabling extrapolation beyond the initial set of human-proposed ideas. To our knowledge, it is the first systematic framework that enables principled, integrated hypothesis generation and efficient evaluation by leveraging predictions in embedded continuous hypothesis spaces.
This framework is particularly well suited to scientific hypothesis generation, where hypotheses exhibit meaningful structure, data are limited, and predictive models may be imperfect. Details of this application and the corresponding experimental results are presented in Section~\ref{sec:experiments}. A more detailed discussion of related work is provided
in Appendix~\ref{related}.

\section{Problem Statement and Preliminaries}


 Consider the problem of  optimizing an unknown true reward function  $f\equiv f^{\mathrm{true}}:\mathcal{X} \mapsto \mathbb{R}$,  where $\mathcal{X} \subseteq[0,r]^d$ is a compact and convex set. We have two noisy function value oracles $y^{\mathrm{i}}:\mathcal{X}\mapsto \mathbb{R}$ for $i
\in\{\mathrm{true},\mathrm{ML}\}$. The ground-truth oracle $y(x)\equiv y^{\mathrm{true}} (x)=f(x)+\varepsilon$ is a noisy observation of the true reward function, with the independent observation noise $\varepsilon \sim \mathcal{N}(0,\eta^2)$. The machine-learning-prediction oracle (which we refer to as the ML oracle or the prediction oracle) outputs noisy observations of $f^{\mathrm{ML}}:\mathcal X \mapsto \R$, which is a potentially biased machine learning prediction of $f$. In particular, $y^{\mathrm{ML}}(x) = f^{\mathrm{ML}}(x)+\varepsilon_{\text{ML}}$ with $\varepsilon_{\text{ML}} \sim \mathcal{N}(0,\eta_{\mathrm{ML}}^2)$, with $\varepsilon_{\text{ML}} $  and $\varepsilon$ independent of each other and of $f^{\mathrm{ML}}$.

We consider the two-stage offline-online setting. In the offline stage, only the prediction oracle can be accessed  a finite number of times. We refer to the offline dataset as  $\mathcal{D}^{\text{off}}=\left\{\left(x_i,y^{\mathrm{ML}}(x_i)\right) \right\}_{i\in [n_{\text{off}}]}$\footnote{We adopt the common notation that $[k]=\{1,2,\ldots,k\}$.}, and the algorithm can decide $n_{\text{off}}$ how many points and which points to query. The online stage proceeds sequentially with a finite horizon $T$ -- at each round $t$, when querying a point, both oracles  are observed by the algorithm, and the algorithm picks the next point to query, repeatedly for $T$ rounds. Let $\mathcal{D}_t = \left\{\left(x_i,y(x_i),y^{\mathrm{ML}}(x_i)\right)\right\}_{i\in [t]}$ denote the online dataset up to round $t$. This setting mirrors common practical scenarios where the true reward function is expensive to evaluate and must therefore be optimized sequentially, while a cheap prediction oracle can be queried at scale ahead of time. Observing both oracles online is natural because once an expensive query is made at a point, the marginal cost of computing the prediction at the same point is negligible, enabling the algorithm to exploit cross-source correlation throughout learning.

\paragraph{Objective} The goal of this paper is to maximize cumulative reward $\sum_{t=1}^T f(x_t)$, which can be interpreted as aiming to perform as well as $x_* = \arg\max_{x\in \mathcal X}f(x)$.   Cumulative regret  is a natural performance metric for this goal.   After an algorithm is run for $T$ rounds, the cumulative regret is $R_T \triangleq \sum_{t\in[T]} \left[f(x_*)-f(x_t)\right]  = \sum_{t\in[T]}r_t$, and we define $r_t = f(x_*)-f(x_t)$ as the instantaneous regret in round $t$.

\paragraph{Gaussian Process Model} We model the true and ML-predicted reward functions jointly as a sample from a two–output Gaussian process:
\[
\begin{bmatrix}
f\\[2pt]
f^{\mathrm{ML}}
\end{bmatrix}
\sim 
\mathcal{GP}\!\left(
\begin{bmatrix}
0\\[2pt]
0

\end{bmatrix},\;
K(x,x')
\otimes 
B
\right),
\qquad 
B =
\begin{bmatrix}
1 & \rho\\[3pt]
\rho & 1
\end{bmatrix}.
\]

It is necessary to impose some structural property on the GP kernel in a compact domain setting, and we consider the following, which is  identical to the classical GP bandits setting in \cite{Srinivas2010}.
\begin{assumption}
\label{assumption:lip} For some constants $a,b>0$,
    \[
    \mathbb{P}\left[ \sup_{x\in \mathcal{X}}\left|\frac{\partial f^{i}}{\partial x_j}\right|>L\right] \leq ae^{-(L/b)^2}, j\in[d],i\in\{\mathrm{true},\mathrm{ML}\}.
    \]
\end{assumption}
If set $L=b\sqrt{\log(da/\delta)}$, then we get with probability greater than $1-\delta$,
\[
|f^{i}(x)-f^{i}(x')|\leq L \norm{x-x'}_1,   \forall x,x'\in \mathcal{X},i\in\{\mathrm{true},\mathrm{ML}\}.
\]
We further restrict to GPs with nonzero and bounded variance $0<K(x,x)\leq 1$.

\paragraph{Notations} To simplify notations, we define the following quantities from the joint posteriors at round $t\in[T]$, $i\in\{\mathrm{true},\mathrm{ML}\}$, \[\mu^{i}_{t}(x)\triangleq \E[f^{i}(x)| \mathcal{D}_{t-1}],\;\mu^{\text{off}}(x)\triangleq\E[f^{\mathrm{ML}}(x)\mid \mathcal{D}^{\text{off}}], \]
\[ \mu_{t}^{\mathrm{ML}, \text{all}}(x)\triangleq\E[f^{\mathrm{ML}}(x)\mid \mathcal{D}^{\text{off}}\cup\mathcal{D}_{t-1}],
\]
\[
(\sigma_t^i(x))^2\triangleq\mathrm{Var}\!\big(f^{i}(x)\mid \mathcal{D}_{t-1}\big),\]
\[(\sigma^{\text{off}}(x))^2\triangleq\mathrm{Var}\!\big(f^{\mathrm{ML}}(x)\mid \mathcal{D}^{\text{off}}\big),
\]
\[(\sigma^{\mathrm{ML},\text{all}}_t(x))^2\triangleq\mathrm{Var}\!\big(f^{\mathrm{ML}}(x)\mid \mathcal{D}^{\text{off}}\cup \mathcal{D}_{t-1}\big),\]
\[ \rho_t(x)\triangleq\frac{\Cov(f(x),f^{\mathrm{ML}}(x)\mid \mathcal{D}_{t-1} )}{\sigma^{\mathrm{true}}_t(x)\sigma^{\mathrm{ML}}_t(x)}.\]

\section{The PA-GP-UCB Algorithm}
\noindent
This section introduces PA-GP-UCB, with details described in Algorithm \ref{alg:gp-mla-ucb}. When applying the PA-GP-UCB algorithm, we maintain two joint GP models at each round $t$, the global $\mathcal{GP}_{\mathrm{all}}$ trained on $\mathcal{D}^{\mathrm{off}} \cup \mathcal{D}_{t-1}$, and  the online $\mathcal{GP}$ trained on only online data $\mathcal{D}_{t-1}$. 

\paragraph{Offline Stage Design} PA-GP-UCB begins with an offline stage, which queries only the prediction oracle on an $\varepsilon$-net \footnote{For \(\mathcal X\subseteq[0,r]^d\) under the \(\ell_\infty\) metric, an \(\varepsilon\)-net can be constructed by a uniform grid: partition each coordinate into intervals of width \(2\varepsilon\) and include the center of each resulting axis-aligned hypercube. Then for any \(x\in\mathcal X\) there exists a grid center \(c\) with \(\|x-c\|_\infty\le \varepsilon\). The net size is at most \(\lceil r/(2\varepsilon)\rceil^d\).} of $\mathcal X$ and collects $N$ repeated observations at each grid center. These observations are then used to perform GP updating on $\mathcal{GP}{\mathrm{all}}$. Repeated queries reduce the effective observation noise from $\eta_{\mathrm{ML}}^2$ to $\eta_{\mathrm{ML}}^2/N$ at grid center points. Then through a high enough coverage by the $\varepsilon$-net, GP interpolation reduces $\sigma_t^{\mathrm{ML},\text{all}}(x)$ for  all $x\in \mathcal X$. Note that the offline stage could be parallelized to finish in $\mathcal O(1)$ round\footnote{We measure round complexity under a parallel query model: in one round we issue up to $Q$ oracle queries non-adaptively, with $Q=\mathcal O (\mathrm{poly}(d,T))$, and receive all responses simultaneously. Under this model, the offline stage has constant round complexity.}. 

We summarize the effect of offline prediction coverage by a uniform ratio bound $(\sigma_t^{\mathrm{ML},\text{all}}(x)/\sigma_t^{\mathrm{ML}}(x))^2\leq R$ for all $x\in\mathcal X$, $t\leq T$. Smaller R means offline data makes the ML-predicted posterior uniformly tighter, and will result in a greater improvement in cumulative regret, described in more detail in Section \ref{sec:regret}.  This  turns offline data design $(\varepsilon,N)$ into a knob that directly controls regret constants. In Lemma \ref{lem:offline_req}, we provide detailed sufficient conditions on the offline procedure parameters $(\varepsilon, N)$ that guarantee the uniform ratio bound to hold for any selected value of $R$ with high probability. Asymptotically\footnote{$\tilde{\mathcal O}$ hides polylogarithmic factors (e.g., powers of $\log T$, $\log d$, etc.).}, the sufficient conditions required for any $R\in(0,1]$ are
\begin{equation}
\label{eqn:suff-cond}
  \varepsilon = \tilde{\mathcal{O}}\left( \sqrt{\frac{R}{T}}\frac{\eta_{\mathrm{ML}}}{d}\right), \text{ and }N = {\Omega}\left(\frac{T}{R}\right).  
\end{equation}
 
The sufficient conditions in (\ref{eqn:suff-cond}) are worst-case and conservative; Sections \ref{sec:numerical} and \ref{sec:experiments} show substantial decrease in cumulative regret even with $\varepsilon$ much larger and N much smaller than required by (\ref{eqn:suff-cond}).


\paragraph{Online Stage Design} To leverage the offline prediction oracle queries in a statistically efficient way, we introduce a control-variates estimator (\ref{eq:mu_mla}) to construct an upper confidence bound for the true reward function used in the online stage: 
\begin{equation}\label{eq:mu_mla}
     \mu^{\mathrm{PA}}_{t}(x) \triangleq  \mu_t^{\mathrm{true}}(x)-\frac{\rho_t(x)\sigma_t^{\mathrm{true}}(x)}{\sigma_t^{\mathrm{ML}}(x)}\big(\mu_t^{\mathrm{ML}}(x)-\mu_{t}^{\mathrm{ML},\text{all}}(x)\big).
\end{equation}
PPI-inspired estimators use a model prediction as a baseline and correct it with a mean-zero residual computed from ground-truth labels \cite{angelopoulos2023prediction}. In our setting, the GP posterior enables an analogous decomposition: the residual $\mu_t^{\mathrm{ML}}(x)-\mu_t^{\mathrm{ML},\text{all}}(x)$ has mean zero conditional on online data, and the GP supplies the optimal (minimum MSE) linear predictor coefficient through $\rho_t(x)\sigma_t^{\mathrm{true}}(x)/\sigma_t^{\mathrm{ML}}(x)$ \cite{anderson2003multivariate}. Unlike MLA-UCB in the discrete-arm setting \cite{ji2025multiarmedbanditsmachinelearninggenerated}, PA-GP-UCB uses a full probabilistic prediction-augmented posterior over a continuous domain, where the offline data reduces uncertainty globally. 



 PA-GP-UCB then selects the next point that maximizes the prediction-augmented upper confidence bound:
\begin{equation}
  \varphi_t(x)\triangleq \mu^{\mathrm{PA}}_{t}(x) + \sqrt{\beta_t}\,\sigma^{\mathrm{PA}}_{t}(x).
\end{equation}
It is shown in Lemma \ref{prop:error_dist} that $(\sigma_t^{\mathrm{PA}}(x))^2$ is equal to
\begin{equation}
\begin{aligned}
(\sigma_t^{\mathrm{true}}(x))^2\Bigl[
\Bigl(\rho_t(x)\tfrac{\sigma_t^{\mathrm{ML,all}}(x)}{\sigma_t^{\mathrm{ML}}(x)}\Bigr)^2 + 1-\rho_t(x)^2
\Bigr].
\end{aligned}
\end{equation}
The posterior uncertainty under our algorithm is always weakly smaller than that under the Vanilla GP-UCB algorithm. The online stage of the algorithm uses the same optimism structure as GP-UCB but with a smaller uncertainty achieved by prediction data.

\begin{algorithm}[t]
\caption{PA-GP-UCB}\label{alg:gp-mla-ucb}
\begin{algorithmic}[1]

\STATE \textbf{Input:} domain $\mathcal{X}\subseteq[0,r]^d$, radius $\varepsilon$, replication $N$, prior kernel $K(\cdot,\cdot)$, correlation matrix $B$, horizon $T$.
\STATE Construct an $\varepsilon$-net $\{c_1,\ldots,c_M\}\subset\mathcal{X}$ of size $M=\lceil \frac{r}{2\varepsilon}\rceil ^d$.
\FOR{$i=1,\ldots,M$}
  \FOR{$j=1,\ldots,N$}
    \STATE 
    Observe $y_{ij}^{\mathrm{ML}} \leftarrow y^{\mathrm{ML}} (c_i)$
    
    \STATE $\mathcal{GP}_{\mathrm{all}} \leftarrow \textsc{Update}\!\left(\mathcal{GP}_{\mathrm{all}},\,(c_i,y^{\mathrm{ML}}_{ij})\right)$
  \ENDFOR
\ENDFOR
\FOR{$t=1,2,\ldots,T$} 
  \STATE $x_t \leftarrow \arg\max_{x\in\mathcal{X}} \varphi_t(x)$
  \STATE Observe $y_{t} \leftarrow y(x_t)$
  \STATE Observe  $y_{t}^{\mathrm{ML}} \leftarrow y^{\mathrm{ML}} (x_t)$
  \STATE $\mathcal{GP}_{\mathrm{all}} \leftarrow \textsc{Update}\!\left(\mathcal{GP}_{\mathrm{all}},\,(x_t,y_t,y^{\mathrm{ML}}_t)\right)$
  \STATE $\mathcal{GP} \leftarrow \textsc{Update}\!\left(\mathcal{GP},\,(x_t,y_t,y^{\mathrm{ML}}_t)\right)$
\ENDFOR

\end{algorithmic}
\end{algorithm}



\begin{figure*}[t]
  \centering

      \vspace{0.8em}
\begin{overpic}[width=0.95\linewidth]{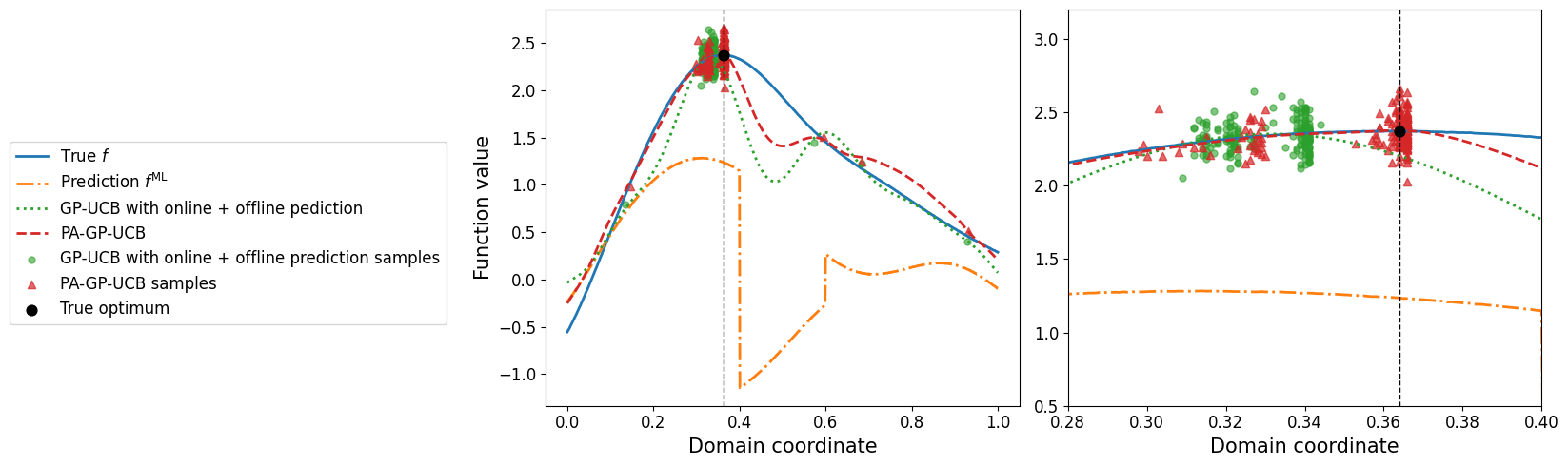}
    \put(49,30){\small\textit{(a)}}
    \put(82,30){\small\textit{(b)}}
\end{overpic}

\caption{
(a) Posterior means at $T=200$ for PA-GP-UCB and na\"{\i}ve prediction-augmented GP-UCB compared
with the ground-truth function $f$ and the locally anti-correlated prediction $f^{\mathrm{ML}}$.
(b) Zoomed-in view around the true optimum, highlighting that PA-GP-UCB
concentrates samples near the optimum while na\"{\i}ve prediction-augmented GP-UCB remains biased
toward the prediction-preferred region.
}

\label{post}
\end{figure*}


\section{Theoretical Guarantees}\label{sec:regret}
This section presents the cumulative regret analysis of PA-GP-UCB. Our bound matches the standard GP-UCB rate $\Tilde{\mathcal O}(\sqrt{dT\gamma_T})$, where  $\gamma_T$ is the information gain of using $T$ number of only
ground-truth oracle queries (without prediction oracle queries), same as in \cite{Srinivas2010}. See definition of $\gamma_T$ in (\ref{eq:info-gain-definition}). Moreover, our algorithm obtains a strictly smaller leading constant whenever the prediction is informative $(\rho\neq 0)$ and offline prediction data reduces uncertainty uniformly (isolated in the constant $R$). 


The multiplicative improvement over Vanilla GP-UCB is controlled by the factor $\sqrt{1-(1-R)\rho^2}$. It equals $1$ (no gain) when $\rho=0$ or $R=1$, and approaches $0$ (maximal gain) as $\rho \rightarrow 1$, and $ R\rightarrow 0$.

\begin{theorem}[Cumulative Regret Bound] \label{thm1}
If Assumption \ref{assumption:lip} holds, and suppose the offline data satisfy the uniform ratio condition 
\[(\sigma_t^{\mathrm{ML},\text{all}}(x)/\sigma_t^{\mathrm{ML}}(x))^2\leq R\] for some $R\in(0,1]$. 

Let $
\beta_t = 2\log(2\pi^2t^2/(3\delta))+4d\log(dtbr\sqrt{\log(4da/\delta)})$, $C_1 = 8/\log(1+({(\eta^2 + \rho^2\eta_{\mathrm{ML}}^2)}/{(1-\rho^2)})^{-1})$. Then with probability $\geq1-\delta$,  PA-GP-UCB satisfies for all $T\geq 1$,
\[
R_T\leq \sqrt{C_1\,\beta_T \,T\,\left[1-(1-R)\rho^2 \right] \gamma_T} +\frac{\pi^2}{6}.
\]
\end{theorem}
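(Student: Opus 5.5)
We follow the Srinivas et al.\ (2010) argument for continuous domains (their Theorem 2 / Lemma 5.8), replacing the GP-UCB posterior by the prediction-augmented posterior $(\mu_t^{\mathrm{PA}},\sigma_t^{\mathrm{PA}})$.

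\textbf{Step 1: the PA posterior is Gaussian.} By Lemma~\ref{prop:error_dist}, conditional on $\mathcal D^{\text{off}}\cup\mathcal D_{t-1}$ we have $f(x)-\mu_t^{\mathrm{PA}}(x)\sim\mathcal N(0,(\sigma_t^{\mathrm{PA}}(x))^2)$. The reason is that $\mu_t^{\mathrm{PA}}$ is the control-variates combination of $\mu_t^{\mathrm{true}}$ with the residual $\mu_t^{\mathrm{ML}}-\mu_t^{\mathrm{ML,all}}$, and that residual is mean zero given $\mathcal D_{t-1}$.

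\textbf{Step 2: confidence bounds.} Apply Assumption~\ref{assumption:lip} with failure probability $\delta/2$. Take a discretization $\mathcal X_t$ of size $\tau_t^d$ with $\tau_t=dt^2br\sqrt{\log(4da/\delta)}$, so that $|f(x)-f([x]_t)|\le 1/t^2$. A Gaussian tail bound plus a union bound with $\pi_t=\pi^2t^2/6$ gives $|f(x)-\mu_t^{\mathrm{PA}}(x)|\le\sqrt{\beta_t}\sigma_t^{\mathrm{PA}}(x)$ on $\mathcal X_t$ and at each $x_t$, which yields exactly the stated $\beta_t$. The standard chain
\[
r_t\le \varphi_t([x_*]_t)+1/t^2-f(x_t)\le \varphi_t(x_t)-f(x_t)+1/t^2\le 2\sqrt{\beta_t}\sigma_t^{\mathrm{PA}}(x_t)+1/t^2
\]
then follows. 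Summing the $1/t^2$ terms produces the $\pi^2/6$.

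\textbf{Step 3: shrink the variance.} Using the ratio condition,
\[
(\sigma_t^{\mathrm{PA}})^2\le(\sigma_t^{\mathrm{true}})^2\bigl[1-(1-R)\rho_t^2\bigr].
\]
This is the step I expect to be the main obstacle. The theorem's factor involves the prior $\rho$, not the posterior $\rho_t(x)$. For the Kronecker kernel with noisy observations of both outputs at the same points, I would try to show $\rho_t(x)^2\ge\rho^2$. The posterior cross-covariance block is $B$ minus a correction, and the correction shrinks both diagonal entries in the same direction, so the correlation should not decrease. If $\eta\ne\eta_{\mathrm{ML}}$ breaks exact proportionality, I would fall back on a monotonicity argument: the coefficient $1-(1-R)\rho_t^2$ is decreasing in $\rho_t^2$, so a lower bound on $\rho_t^2$ suffices. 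One could also accept a bound in terms of $\inf_t\rho_t^2$ and then argue it equals $\rho^2$.

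\textbf{Step 4: information gain.} Write $\sum_t 4\beta_t(\sigma_t^{\mathrm{PA}})^2\le 4\beta_T[1-(1-R)\rho^2]\sum_t(\sigma_t^{\mathrm{true}}(x_t))^2$. Then convert $\sum_t(\sigma_t^{\mathrm{true}})^2$ into $\gamma_T$ via $s\le \log(1+s/\sigma^2)/\log(1+\sigma^{-2})$ for $s\le1$. The effective noise here is the residual noise of $f$ after conditioning on the paired ML observation, namely $\sigma^2=(\eta^2+\rho^2\eta_{\mathrm{ML}}^2)/(1-\rho^2)$, which accounts for the $C_1$ constant. I would justify this by noting that the joint information gain from paired observations dominates the information about $f$ alone, and bounding it by $\gamma_T$ as defined. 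Finally, Cauchy--Schwarz gives $R_T\le\sqrt{T\sum_t r_t^2}$, which yields the stated bound. Taking the union over the two $\delta/2$ events completes the argument.
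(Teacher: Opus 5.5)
Steps 1--2 of your proposal follow the paper: the Gaussian PA error, the discretized confidence bound, and $r_t\le 2\sqrt{\beta_t}\,\sigma_t^{\mathrm{PA}}(x_t)+1/t^2$. Step 3, however, has a genuine gap. The inequality $\rho_t(x)^2\ge\rho^2$ you plan to prove is false, and typically the reverse holds. Because the two outputs carry independent noise, paired observations decorrelate the posterior of $(f(x),f^{\mathrm{ML}}(x))$. Take $n$ paired observations at $x$ itself, with $K(x,x)=1$. Inverting the posterior precision $B^{-1}+n\,\mathrm{diag}(\eta^{-2},\eta_{\mathrm{ML}}^{-2})$ gives
\[
\rho_t(x)=\frac{\rho}{\sqrt{\bigl(1+n(1-\rho^2)/\eta^2\bigr)\bigl(1+n(1-\rho^2)/\eta_{\mathrm{ML}}^2\bigr)}}\;\longrightarrow\;0 .
\]
This happens even when $\eta=\eta_{\mathrm{ML}}$. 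The data correction to $B$ is not proportional to $B$, because the $(1,1)$ and $(1,-1)$ eigendirections of $B$ have different signal-to-noise ratios. Hence $1-(1-R)\rho_t(x_t)^2$ is generally \emph{larger} than $1-(1-R)\rho^2$. Your fallback through $\inf_t\rho_t^2$ then gives essentially no improvement, and the inequality $\sum_t(\sigma_t^{\mathrm{PA}}(x_t))^2\le[1-(1-R)\rho^2]\sum_t(\sigma_t^{\mathrm{true}}(x_t))^2$ that your Step~4 rests on is not available.

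The missing idea is to avoid comparing $\rho_t$ with $\rho$ pointwise. The paper first splits
\[
[1-(1-R)\rho_t^2](\sigma_t^{\mathrm{true}})^2=R(\sigma_t^{\mathrm{true}})^2+(1-R)(1-\rho_t^2)(\sigma_t^{\mathrm{true}})^2 .
\]
It then identifies $(1-\rho_t(x)^2)(\sigma_t^{\mathrm{true}}(x))^2=\Var(f(x)\mid f^{\mathrm{ML}}(x),\mathcal D_{t-1})$. Finally it uses the residual decomposition $f=\rho f^{\mathrm{ML}}+\sqrt{1-\rho^2}\,\xi$, with $\xi\sim\mathcal{GP}(0,K)$ independent of $f^{\mathrm{ML}}$, to obtain
\[
(1-\rho_t(x)^2)(\sigma_t^{\mathrm{true}}(x))^2=(1-\rho^2)\Var(\xi(x)\mid f^{\mathrm{ML}}(x),\mathcal D_{t-1})\le(1-\rho^2)(\sigma_t^{\xi}(x))^2 .
\]
So the prior $\rho$ enters as a multiplier on the posterior variance of a different process, and the two sums are bounded separately.
\begin{itemize}
\item The residual $\xi(x_t)$ is observed through $z_t=(y_t-\rho y_t^{\mathrm{ML}})/\sqrt{1-\rho^2}$, with noise variance $(\eta^2+\rho^2\eta_{\mathrm{ML}}^2)/(1-\rho^2)$. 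This, not a ``residual noise of $f$'', is where $C_1$ comes from. Lemma~5.4 of Srinivas et al.\ plus monotonicity of $\gamma_T$ in the noise give $4\sum_t(\sigma_t^\xi(x_t))^2\le C_1\gamma_T$.
\item Comparing with the single-output posterior gives $4\sum_t(\sigma_t^{\mathrm{true}}(x_t))^2\le C(\eta^2)\gamma_T\le C_1\gamma_T$.
\end{itemize}
Adding the two yields $C_1[R+(1-R)(1-\rho^2)]\gamma_T=C_1[1-(1-R)\rho^2]\gamma_T$.

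A few smaller points:
\begin{itemize}
\item Lemma~\ref{prop:error_dist} is stated conditional on $\mathcal D_{t-1}$ only, with the offline observations part of the randomness of $E_1$. That is also what your mean-zero justification in Step~1 uses.
\item The joint two-output information gain you invoke in Step~4 is not bounded by $\gamma_T=\gamma_T(k,\eta^2)$ in general, so use the variance comparison above instead.
\item Apply Cauchy--Schwarz only to $\sum_t 2\sqrt{\beta_t}\,\sigma_t^{\mathrm{PA}}(x_t)$, and add the $1/t^2$ terms separately.
\end{itemize}
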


\begin{proof}
    Theorem \ref{thm1} is proved in Appendix
\ref{app:proofs}.
\end{proof}

When applying the Vanilla UCB-GP algorithm described in \cite{Srinivas2010}, the cumulative regret under the same assumptions is guaranteed to have the following for all $T\geq 1$,
\[
R_T^{\text{Vanilla}} \leq \sqrt{C_2\, T\, \beta_T\, \gamma_T} + \frac{\pi^2}{6}, \quad  C_2=\frac{8}{\log(1+\eta^{-2})}.
\]

\begin{corollary}[Strictly Better Performance]
\label{cor:strictly-better} Let $R^* \triangleq  \min\left\{\frac{\frac{C_2}{C_1} -(1-\rho^2)}{\rho^2},1\right\}$. 
    When $R< R^*$ and $\rho \neq 0$, 
then running Vanilla GP-UCB algorithm gives a cumulative regret strictly larger than that after running PA-GP-UCB, i.e. for all $T\geq 1$,
\[
R_T^{\mathrm{Vanilla}}> R_T.
\]

An asymptotic and with-high-probability conservative sufficient condition is 
    \begin{equation}
    \label{eps-N-req}
        \varepsilon = \tilde{\mathcal{O}}\left( \sqrt{\frac{R^*}{T}}\frac{\eta_{\mathrm{ML}}}{d}\right), \text{ and }N = {\Omega}\left(\frac{T}{R^*}\right).
    \end{equation}
\end{corollary}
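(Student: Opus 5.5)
The plan is to read the corollary as a comparison of the two high-probability upper bounds, not of the realized regrets. The claimed inequality $R_T^{\mathrm{Vanilla}} > R_T$ will be established at the level of the guaranteed bounds: Theorem \ref{thm1} for PA-GP-UCB and the Vanilla GP-UCB bound of \cite{Srinivas2010} stated just above. Both bounds share the additive $\pi^2/6$ and the factor $\sqrt{T\beta_T\gamma_T}$, with the same $\beta_T$, the same $\gamma_T$ (information gain of ground-truth queries only) and the same confidence level $\delta$. Hence it suffices to show
\[
C_1\bigl[1-(1-R)\rho^2\bigr] < C_2 .
\]
Both sides are positive, so the square root preserves the strict inequality, and the result holds for every $T\geq 1$ because $T$ enters only through the common factor.

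Next, rewrite $1-(1-R)\rho^2 = (1-\rho^2) + R\rho^2$. With $\rho\neq 0$ we may divide by $\rho^2>0$, and the target inequality becomes $R < \bigl(C_2/C_1 - (1-\rho^2)\bigr)/\rho^2$. If $R^*$ equals this quantity, the hypothesis $R<R^*$ is exactly what is needed. If instead the minimum is attained at $1$, then $R<1 \le \bigl(C_2/C_1-(1-\rho^2)\bigr)/\rho^2$, and the inequality follows again. One point needs checking: $R^*$ must be positive, i.e.\ $C_2/C_1 > 1-\rho^2$, or the hypothesis is vacuous. I would note this as the nondegeneracy requirement and verify when it holds by comparing $\log(1+\eta^{-2})$ with $\log\bigl(1+(1-\rho^2)/(\eta^2+\rho^2\eta_{\mathrm{ML}}^2)\bigr)$, using $\log(1+ax)\ge a\log(1+x)$ for $a\in[0,1]$ where convenient. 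If $R^*\le 0$, the statement holds trivially because no $R\in(0,1]$ satisfies $R<R^*$.

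For the second part, substitute $R=R^*$ into Lemma \ref{lem:offline_req}, the offline design lemma giving sufficient $(\varepsilon,N)$ for the uniform ratio bound. Its asymptotic form \eqref{eqn:suff-cond} yields \eqref{eps-N-req} directly. To obtain a strict $R<R^*$, take $R$ slightly below $R^*$, for instance $R^*/2$; this changes only constants absorbed in $\tilde{\mathcal O}$ and $\Omega$. To keep the total failure probability at most $\delta$, apply a union bound, allocating $\delta/2$ to Theorem \ref{thm1} and $\delta/2$ to Lemma \ref{lem:offline_req}; the reallocation affects only logarithmic terms.

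The main obstacle is interpretive rather than technical. A literal claim that the realized regret of Vanilla GP-UCB exceeds that of PA-GP-UCB on every sample path cannot follow from upper bounds, so the proof must state explicitly that the comparison concerns the guaranteed bounds. The second thing to check carefully is positivity of $R^*$, that is, the relation between $C_1$ and $C_2$ as functions of $\eta$, $\eta_{\mathrm{ML}}$ and $\rho$.
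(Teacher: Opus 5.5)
Your proposal is correct and is essentially the argument the paper relies on; the paper gives no separate proof, and the corollary is read off by comparing the bound of Theorem~\ref{thm1} with the Vanilla bound, reducing to $C_1[1-(1-R)\rho^2]<C_2$, solving for $R$ using $\rho\neq 0$, and setting $R=R^*$ in Lemma~\ref{lem:offline_req} and \eqref{eqn:suff-cond} to get the $(\varepsilon,N)$ condition. Your reading of $R_T^{\mathrm{Vanilla}}>R_T$ as a statement about the guaranteed bounds, rather than realized regrets, is the only one these tools support.

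As a minor aside, $C_1\ge C_2$ always holds, so the cap at $1$ in $R^*$ is redundant. Also, with $x=\eta^{-2}$ and $a=(1-\rho^2)\eta^2/(\eta^2+\rho^2\eta_{\mathrm{ML}}^2)\le 1-\rho^2$, your concavity bound only gives $C_2/C_1\ge a$, so it cannot certify $R^*>0$. Indeed $R^*<0$ can happen, e.g.\ when $\eta^2=\eta_{\mathrm{ML}}^2$ is large, but your vacuous-case remark already covers this.
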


In practice, one chooses an offline budget/design (e.g., $\varepsilon$-net size and replication $N$, 
or alternative non-uniform designs) that is computationally feasible, and can empirically estimate the 
induced $\hat{R}$ on a holdout set by comparing $\sigma_t^{\mathrm{ML},\text{all}}(x)$ and $\sigma_t^{\mathrm{ML}}(x)$ for $x$ in the holdout set. Corollary \ref{cor:strictly-better} can then be read as: whenever $\hat{R}<R^*$, PA-GP-UCB provably improves the regret constant. Our experiments in later sections show substantial regret reductions even with much larger $\varepsilon$ and 
smaller $N$ than~(\ref{eps-N-req}), consistent with the fact that~(\ref{eps-N-req}) is extremely conservative and that 
performance depends on the variance reduction achieved at the points actually explored by the algorithm, rather than worst-case uniform coverage.



\section{Numerical Analysis}
\label{sec:numerical}

We first validate PA-GP-UCB on synthetic data to examine its efficiency and bias
correction under controlled conditions.
We generate a ground-truth function $f$ by sampling from a one-dimensional Gaussian
process with an RBF kernel over a  domain $D \subseteq [0,1]$.
To construct a correlated but biased prediction, we draw an independent GP sample $g$
from the same kernel and define
$
f^{\mathrm{ML}} = \rho\, f + \sqrt{1-\rho^{2}}\, g,
$
which we use as the prediction with controllable correlation $\rho$ to the
ground truth.
At each iteration, the algorithm queries both the human oracle and the prediction oracle
with different noise variances and updates a joint two-task GP posterior.

\begin{figure*}[t]
  \centering
     \vspace{-0.9em}

  \begin{subfigure}{0.32\linewidth}
    \centering
    \caption{}
    
    \includegraphics[width=\linewidth]{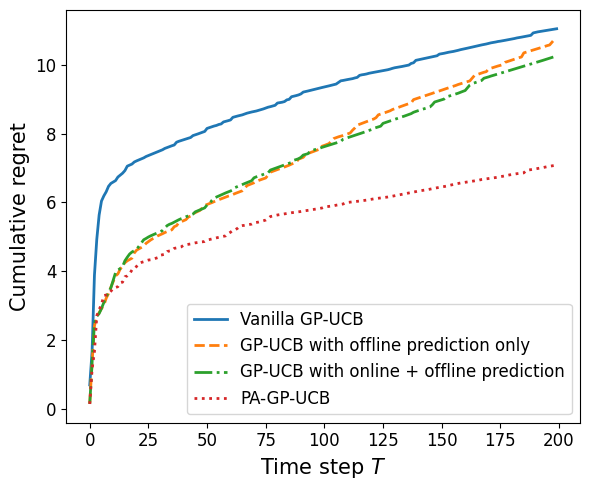}
  \end{subfigure}
  \begin{subfigure}{0.32\linewidth}
    \centering
    \caption{}
    \includegraphics[width=\linewidth]{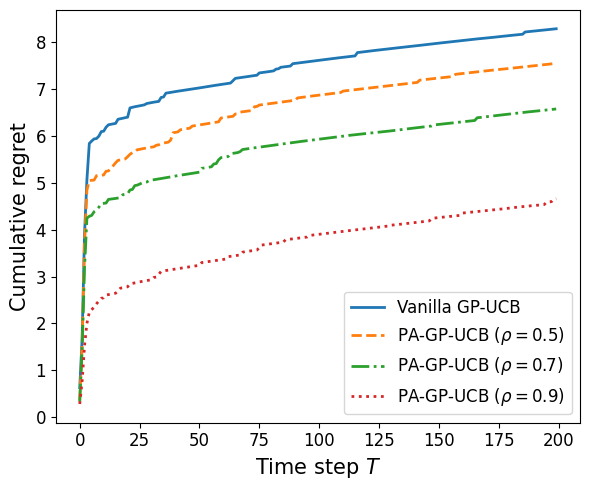}
  \end{subfigure}
  \begin{subfigure}{0.32\linewidth}
    \centering
 
    \caption{}
    \includegraphics[width=\linewidth]{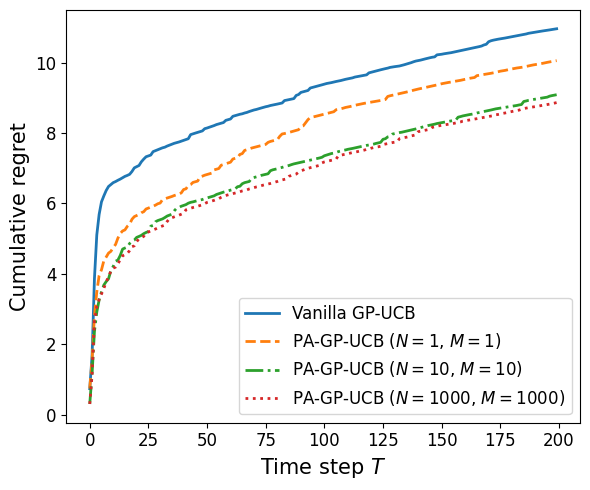}
  \end{subfigure}

\caption{
Cumulative regret of Vanilla GP-UCB, na\"{\i}ve prediction-augmented GP-UCB baselines, and PA-GP-UCB,
averaged over $50$ runs with horizon $T=200$.
(a) Comparison of Vanilla GP-UCB, GP-UCB with offline predictions only,
GP-UCB with both online and offline predictions, and PA-GP-UCB under
$\rho = 0.8$, $\eta^2 = \eta_{\mathrm{ML}}^2 = 0.01$, and $M = N = 1000$.
(b) Effect of correlation $\rho \in \{0.5, 0.7, 0.9\}$ with
$\eta^2 = \eta_{\mathrm{ML}}^2 = 0.001$ and $M = N = 1000$.
(c) Effect of varying the offline prediction data sizes $M$ and $N$
with $\eta^2 = \eta_{\mathrm{ML}}^2 = 0.01$.
}

\label{fig:exp12}
\end{figure*}
\subsection{Comparison to Na\"{\i}ve Prediction-Augmented Baselines}
In addition to Vanilla GP-UCB, we consider two na\"{\i}ve prediction-augmented baselines:
(i) GP-UCB with offline prediction data, which incorporates only pre-collected
prediction observations into the GP prior, and
(ii) GP-UCB with offline and online prediction data, which augments the GP with
both offline predictions and prediction observations collected online but does not
correct for prediction bias.
These baselines allow us to separately evaluate the contribution of variance reduction from prediction data and the additional benefit of explicit bias correction.

To evaluate robustness under structured prediction misspecification, we construct a
synthetic setting in which the predictions are globally correlated with the true objective
but locally misleading.
Specifically, we flip the sign of the prediction within a sub-interval $[0.4,0.6]$, i.e.,
$f^{\mathrm{ML}}(x)\leftarrow -f^{\mathrm{ML}}(x)$ in this region.
This preserves global correlation while inducing a localized regime in which the
prediction is confidently anti-correlated with the true objective. Figure~\ref{post} and Figure~\ref{fig:exp12}(a) show that GP-UCB augmented
with offline, or offline and online, prediction data initially achieves relatively low cumulative
regret by exploiting prediction guidance, but its posterior remains biased toward the
prediction-preferred region and fails to recover the true optimum.
In contrast, PA-GP-UCB uses online prediction feedback to estimate local correlation and
explicitly correct the prediction-induced shift in the human posterior, enabling it to
reallocate exploration toward the true optimum and achieve both lower asymptotic regret
and accurate localization of the optimum.

\subsection{Ablation on Prediction Correlation and Offline Coverage}

Figure~\ref{fig:exp12}(b) shows that PA-GP-UCB consistently outperforms Vanilla GP-UCB
across all correlation levels.
Notably, even with moderate correlation ($\rho = 0.5$), PA-GP-UCB achieves substantial
improvements, indicating that strong alignment between predictions and the ground-truth
objective is not required in practice.
Higher values of $\rho$ further amplify this advantage by enabling more effective
information transfer from the offline predictor.

Figure~\ref{fig:exp12}(c) varies the number of offline $\varepsilon$-net size $M$ and repetitions $N$.
PA-GP-UCB outperforms Vanilla GP-UCB across all settings, including the minimal regime
$M = N = 1$, suggesting that even a very small amount of offline prediction data is
sufficient to yield meaningful performance gains in practice, without requiring the
strong conditions assumed by the theoretical guarantees.
Increasing $M$ and $N$ further improves performance by reducing posterior uncertainty
in the prediction oracle.

Further posterior visualizations illustrating the temporal evolution of PA-GP-UCB
are provided in Appendix~\ref{addpost}.
Additional ablation studies examining the effects of observation noise and prediction
noise are reported in Appendix~\ref{moreablations}.

\section{Application to Hypothesis Generation}
\label{sec:experiments}

We now apply PA-GP-UCB to the problem of hypothesis generation, where the goal is to discover high-quality hypotheses with minimal human feedback or evaluation of the ground truth. Each hypothesis corresponds to a point $x$ in a continuous embedding space $\mathcal{X} \subset \mathbb{R}^d$ (e.g., the latent representation of a scientific idea, design concept, or policy proposal). A human oracle provides expensive, noisy ground-truth feedback $f(x)$, while machine learning predictions (e.g., LLMs) provide cheap, biased predictions $f^{\mathrm{ML}}(x)$. The key challenge is to efficiently explore $\mathcal{X}$ and converge to the hypothesis that maximizes the ground truth function, using the prediction to accelerate exploration while correcting for its bias \cite{Zhou_2024}. PA-GP-UCB is particularly well-suited for this task because it naturally generates novel candidate hypotheses by maximizing the UCB of the bias-corrected posterior, enabling extrapolation beyond the initial set of human-proposed ideas.  By combining human anchoring with
prediction-accelerated exploration, PA-GP-UCB provides an efficient framework
for human--AI co-discovery with provably faster convergence than standard
GP--UCB.


We evaluate PA-GP-UCB in two complementary environments, both grounded in
the real human behavioral dataset of \cite{milkman2021megastudies}. The dataset
contains large-scale field measurements of how different experimental messaging
conditions influence user visitation behavior. We compute the mean human visitation
reward for each experimental condition:
$
    f(a_i)
    =
    \mathbb{E}[\text{visits} \mid \text{exp\_condition} = a_i],
    i = 1,\dots, n.
$ This yields $n= 54$ distinct human-defined ``arms'' $\{a_i\}$ with
corresponding ground-truth human outcomes. These same arms form the basis of
our two evaluation settings.  
First, in a finite-arm setting, the algorithm operates directly
on the 54 discrete human-measured hypotheses, yielding a bandit problem with no
underlying smoothness structure.  
Second, we construct a human-derived synthetic environment, a
structure-preserving continuous benchmark obtained by embedding the same 54 arms
into a two-dimensional semantic manifold and smoothing their human reward values
into a continuous function.  
\begin{figure*}[t]
  \centering
     \vspace{-0.9em}
  \begin{subfigure}{0.32\linewidth}
  \centering
  \caption{}
  \includegraphics[width=\linewidth]{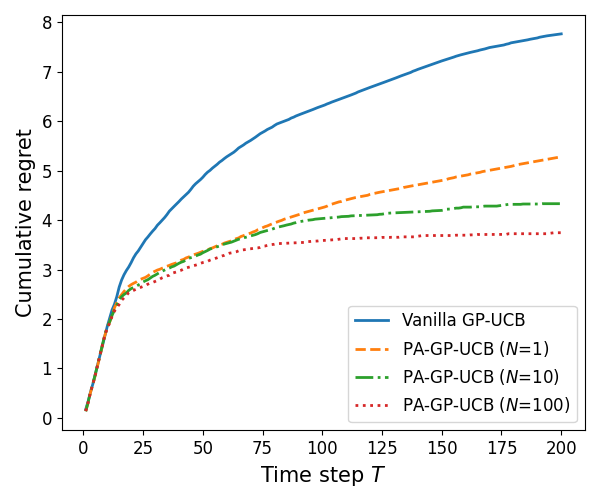}
\end{subfigure}
  \begin{subfigure}{0.32\linewidth}
  \centering
  \caption{}
  \includegraphics[width=\linewidth]{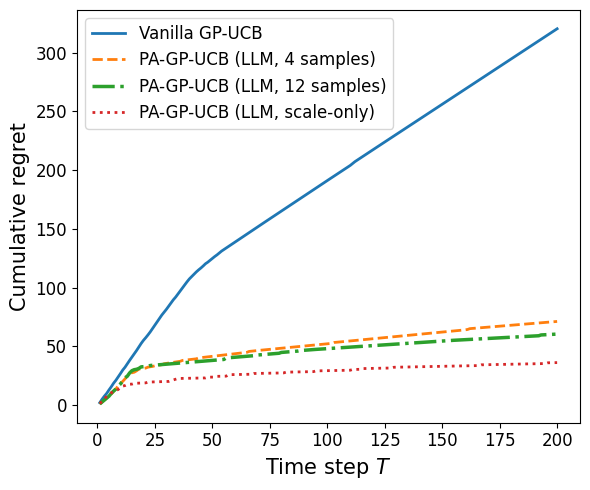}
\end{subfigure}

   \caption{
Cumulative regret over horizon $T = 200$, averaged over $50$ independent runs.
(a) Finite-arm setting: comparison between Vanilla GP-UCB and PA-GP-UCB for different
numbers of offline prediction duplicates $N \in \{1, 10, 100\}$, with
$\eta^2 = \eta_{\mathrm{ML}}^2 = 0.001$ and $\hat \rho = 0.66$.
(b) Continuous setting: comparison between Vanilla GP-UCB and PA-GP-UCB using an LLM-based
prediction, where the prediction is conditioned on different numbers of ground-truth
examples via in-context prompting, with
$\eta^2 = \eta_{\mathrm{ML}}^2 = 0.001$ and $\hat{\rho}= 0.8$.
}

  \label{exp21}
\end{figure*}

\subsection{Finite-Arm Hypothesis Evaluation}
We treat the set of hypotheses for which we have real experimental outcomes as a 
finite action set 
$
\mathcal{A} = \{x_1, \dots, x_n\},
$ where each $x_i$ is an embedding of an intervention or hypothesis description. We define
$
f(x_i) = f(a_i), i = 1,\dots, n
$
treating the collected experimental data as the true underlying human feedback function. We construct $f^{\mathrm{ML}}$ using a lightweight regression model trained on the
embedding space, where only half of the arms are visible during training and,
for those visible arms, only half of their observations are used. This mimics the real-world scenarios where $f^{\mathrm{ML}}$ takes less information and is correlated to $f$ with bias. The regret is computed with respect to the best human-evaluated hypothesis 
$x^* = \arg\max_{x_i \in \mathcal{A}} f(x_i)
$ and $
R_T = \sum_{t=1}^T \big(f(x^*) - f(x_t)\big).
$






\subsubsection{Results}
Across the $54$ experimental conditions, the prediction model achieves a moderate
empirical correlation of $\hat \rho = 0.66$ with the ground-truth human reward.
Figure~\ref{exp21}(a) shows cumulative regret over $T = 200$ for
PA-GP-UCB with varying numbers of offline prediction duplicates ($1$, $10$, and
$100$).
Across all settings, PA-GP-UCB exhibits substantially lower sublinear regret than
Vanilla GP--UCB.
The performance gap widens as the number of offline prediction duplicates increases,
since additional prediction observations provide stronger global structure and reduce
posterior uncertainty in regions where human data are sparse. Notably, even though $\hat \rho$ is an empirical estimate that may vary with $x$, using this estimated correlation is sufficient to significantly reduce regret, highlighting the robustness of PA-GP-UCB to heterogeneous prediction alignment.

In this discrete bandit setting, the true reward is highly non-smooth and does not
admit a meaningful GP structure, causing Vanilla GP--UCB to exhibit large posterior
uncertainty and unstable exploration. PA-GP-UCB instead leverages a noisy but
globally informative prediction model to impose latent structure over the action
space. The resulting two-task GP reduces posterior variance early, while the control-variates estimator debiases the prediction and balances information across tasks, yielding
more stable exploration and faster convergence despite the discrete, non-GP
nature of the reward.

\subsection{Structure-Preserving Continuous Hypothesis Evaluation}

To construct a realistic yet fully controllable benchmark, we build a
continuous two-dimensional environment directly from the real human
evaluation dataset. This preserves the semantic geometry of human hypotheses
while enabling dense sampling and closed-form regret computation. Each arm description $a_i$ is embedded through a TF--IDF vectorizer \cite{SALTON1988513} with
bi-grams
$
    v_i = \mathrm{TFIDF}(a_i) \in \mathbb{R}^d .
$
We reduce the TF--IDF vectors to a human-structured two-dimensional manifold
$
    x_i = \mathrm{UMAP}(v_i) \in [0,1]^2
$ via UMAP \cite{mcinnes2020umapuniformmanifoldapproximation}.
The embedding is rescaled to $[0,1]^2$ along each
dimension. UMAP clusters semantically similar interventions into coherent regions and produces a low-dimensional manifold enabling generalization beyond the 54 observed arms. This construction relies only on corpus-local statistics, avoiding pretrained semantic embeddings (e.g., word2vec or sentence transformers) that could introduce additional inductive bias.

We fit a Gaussian process
$
    f(x) \sim \mathcal{GP}\!\left(
        0, \;
        \sigma_f^2 \exp\!\left( -\tfrac{1}{2\ell^2}\|x-x'\|^2 \right)
    \right)
$
to the $54$ human-labeled points $\{(x_i, f(a_i))\}$.  
The GP posterior mean, evaluated on a dense $100 \times 100$ grid
$
    D_{\mathrm{grid}} = \{x_1,\dots,x_M\} \subset [0,1]^2 , M=10{,}000 ,
$
serves as the continuous human reward function
$
f^{\mathrm{GP}}(x).
$ 
To construct a cheap but imperfect auxiliary model, we use a LLM (OpenAI \texttt{o4-mini} model \cite{openai_o4mini_2025})
as a predictor of human reward.
For each arm $x_i$, we define
$
    f^{\mathrm{ML}}(x_i) \;\triangleq\; \mathbb{E}_{\text{LLM}}\!\left[\text{predicted visits}\mid \text{prompt}(x_i)\right],
$
where the expectation is approximated by a single fixed-prompt LLM query (the full prompt templates are provided in Appendix~\ref{app:llm-prompts}). We consider three prediction variants based on the amount of ground-truth information
provided via in-context prompting: (i) a $K$-shot calibrated prediction with $K=4$
ground-truth examples, (ii) a stronger $K$-shot prediction with $K=12$ examples, and
(iii) a scale-only prediction that receives only global summary statistics (min, max,
mean) without labeled examples.
Each variant induces a biased but informative predictor with increasing alignment to
the human reward as $K$ grows. Across all three variants,
$
   \mathrm{corr}(f^{\mathrm{ML}}, f^{\mathrm{GP}}) \approx 0.8,
$
indicating that the prediction is informative but systematically inaccurate.


The true optimum of the ground-truth GP surface and the cumulative regret are
$
    x^\star = \arg\max_{x\in D_{\mathrm{grid}}} f^{\mathrm{GP}}(x)$ and $R_T = \sum_{t=1}^{T}
    \big( f^{\mathrm{GP}}(x^\star) - f^{\mathrm{GP}}(x_t) \big).
$ Given the best point $x_{\mathrm{best}}$ discovered by PA-GP-UCB on the continuous
grid, we map it back to a set of nearby real experimental conditions.
Specifically, we find the indices of the $k$ closest arms in the 2-D UMAP
embedding:
$
    \mathcal{N}_k(x_{\mathrm{best}}) 
    = \Bigl\{ i \in \{1,\dots,n\} : 
      x_i \text{ is among the $k$ nearest neighbors of } x_{\mathrm{best}} \Bigr\}.
$
We then take the original textual descriptions 
$\{a_i : i \in \mathcal{N}_k(x_{\mathrm{best}})\}$ and prompt an LLM to
summarize the common structure and strategy they represent (e.g., shared
targeting rules, interventions, or messaging patterns). This produces an
interpretable, human-readable explanation of the hypothesis $H^\star$ that PA-GP-UCB discovers in
the continuous environment.

  

\subsubsection{Results}
Mapping the PA-GP-UCB optimum $x_{\mathrm{best}}$ back to the discrete arms
reveals its $k=6$ nearest neighbors: \textit{Higher Incentives a}, \textit{Higher
Incentives b}, \textit{Gain-Framed Micro-Incentives}, \textit{Choice of
Gain- or Loss-Framed Micro-Incentives}, \textit{Loss-Framed
Micro-Incentives}, and \textit{Planning, Reminders \& Micro-Incentives to
Exercise}, which is consistent with the ground-truth human outcomes.

$H^\star$: 
Interventions that combine relatively large, frequent, performance-contingent monetary incentives, 
together with basic planning and reminder scaffolding, are especially effective at increasing gym visits.

Figure~\ref{exp21}(b) shows cumulative regret for Vanilla GP--UCB
and PA-GP-UCB under different LLM prediction variants. Increasing the number of
ground-truth examples used to calibrate the prediction improves performance,
indicating that better prediction alignment yields more efficient exploration.
Notably, the scale-only prediction performs best, as it provides reliable global
magnitude information without overfitting to a small labeled set. Notably, Vanilla GP--UCB exhibits nearly linear regret due to posterior misspecification arising from a mismatch between the GP kernel and the latent semantic geometry of the reward function, whereas PA-GP-UCB achieves stable, sublinear regret by regularizing posterior uncertainty with a correlated auxiliary signal through a prediction-informed two-task GP, making it substantially less sensitive to kernel misspecification.

More generally, PA-GP-UCB operates over an abstract optimization space 
$\mathcal{X}$ and does not require explicit dimensionality reduction or lossy embeddings. Hypotheses may be high-dimensional or discrete, provided an encoding defines a similarity structure and admits exact decoding. This makes the framework particularly well suited to scientific hypothesis generation, where structured hypotheses induce meaningful geometry, data are sparse, models are often misspecified, and prediction-augmented uncertainty reduction offers substantial gains over standard bandit methods.


\section{Conclusions}
We propose PA-GP-UCB, a
GP bandit algorithm that integrates a cheap but biased
prediction oracle and offline data to obtain provable gains in sample efficiency. Under standard GP-UCB assumptions, PA-GP-UCB preserves the usual
$\tilde{\mathcal O}(\sqrt{T\beta_T\gamma_T})$ regret rate while achieving a
strictly smaller leading constant, explicitly controlled by prediction quality
and offline data coverage.
 Empirically, PA-GP-UCB converges faster than Vanilla GP-UCB and na\"{\i}ve prediction-augmented GP-UCB baselines on synthetic benchmarks.
 We further
demonstrate its utility as a novel framework for hypothesis generation through
experiments grounded in real human outcomes. In this setting, PA-GP-UCB provides a principled, end-to-end integration of
hypothesis generation and evaluation in embedded continuous hypothesis spaces.

One open question is whether the sufficient conditions on $(\varepsilon, N)$ can be significantly weakened, as they are conservative and likely not sample-optimal. These conditions are most relevant when offline predictions are orders of magnitude cheaper than online evaluations and can be heavily parallelized. A key direction for future work is improving offline data efficiency, for example via adaptive or non-uniform designs or guarantees based on trajectory-dependent variance reduction rather than a uniform worst-case $R$. Additional directions include theoretical guarantees of robustness to kernel and prediction misspecification, cost-aware offline sampling that optimizes the offline--online tradeoff, and applications to large-scale scientific hypothesis generation problems with scarce and expensive human evaluation.

\section*{Impact Statement}
This work advances the field of machine learning with a focus on Bayesian optimization,
prediction-augmented algorithms, and Gaussian process bandit optimization.
By improving online sample efficiency and reducing reliance on expensive evaluations,
the proposed methods have the potential to lower time, computational, and resource costs
in real-world applications such as scientific experimentation, engineering design, and
human-in-the-loop decision-making.
More efficient optimization procedures can enable faster scientific discovery,
reduce experimental waste, and make resource-intensive workflows more accessible.
While the techniques developed in this paper are methodological in nature, their
broader impact lies in supporting more efficient and responsible use of costly
human, computational, and physical resources across a range of domains.
The methodologies developed in this paper do not raise
specific ethical concerns beyond those commonly associated with the responsible use of
machine learning systems.

\bibliography{example_paper}
\bibliographystyle{icml2026}

\newpage
\appendix
\onecolumn

\section{Related Work}
\label{related}

The classical Gaussian Process Upper Confidence Bound (GP-UCB) algorithm is proposed in \cite{Srinivas2010}, which gives a $\mathcal{O}(\sqrt{dT\gamma_T})$ regret bound, with improvements and variations studied in \cite{Chowdhury_2017}. GP-UCB is a Upper Confidence Bound (UCB)-style  Bayesian algorithm \cite{auer2002nonstochastic, auer2002using}. Our work builds upon the Vanilla GP-UCB algorithm, with the addition of oracle access to a biased but correlated predicted reward model. This augmentation of the algorithm reduces the cumulative regret of GP-UCB by a constant $\in (0,1)$. Other lines of work in Gaussian Process (GP) bandit optimization consider different oracle access models or GP-model assumptions \cite{iwazaki2023failure, bogunovic2016time}, which are orthogonal to ours as we consider a setting augmented by predicted feedback.

Meanwhile, a separate line of work focuses on enhancing statistical inference and decision-making with machine-learned predictions. The Prediction-Powered Inference (PPI) framework \cite{angelopoulos2023prediction} uses an offline-trained model to generate a mean predictor for bias correction. Other papers have also explored the idea of incorporating a prediction model in various inference tasks \cite{robins1994estimation, pepe1992inference, scharfstein1999adjusting, athey2025surrogate}, where some literature refers to the prediction model as the surrogate model. This idea was extended to bandits by \cite{ji2025multiarmedbanditsmachinelearninggenerated}, whose Machine Learning-Assisted Upper Confidence Bound (MLA-UCB) algorithm uses an offline model to accelerate exploration in discrete multi-armed bandit problems.

While effective for inference, PPI does not address sequential decision-making or
exploration in optimization problems. While MLA-UCB extends this idea to a
sequential setting by incorporating machine-learned predictions into multi-armed
bandits, MLA-UCB treats predictions as arm-level advice in a discrete action space and
does not exploit structural or geometric relationships between arms, nor does it extend to continuous domains. PA-GP-UCB takes a different approach by integrating predictions directly into a joint
Gaussian process model over high- and low-fidelity signals.
The resulting covariance structure governs both exploration and interpolation in a
continuous domain.
Within this framework, PA-GP-UCB employs a PPI-style control-variates estimator inside
the bandit algorithm, enabling principled variance reduction while supporting
sample-efficient exploration.


Our work connects to recent applications of UCB-style algorithms for hypothesis generation with LLMs. Specifically, \cite{Zhou_2024} proposes a UCB-inspired method to enhance LLM-based hypothesis generation, supported by empirical experiments. This builds on growing interest in using LLMs for scientific hypothesis generation \cite{griffiths2019constrainedbayesianoptimizationautomatic} and as prediction or surrogate models in optimization \cite{yang2024largelanguagemodelsoptimizers}. Our paper formalizes the hypothesis generation process in a continuous embedding space into a mathematical framework with provable guarantees, extending the empirical findings of \cite{Zhou_2024} and providing theoretical grounding for the use of bias-corrected predictions in Bayesian optimization. 

Finally, our work is also closely related but orthogonal to literature on multi-fidelity bandit algorithms \cite{kandasamy2016gaussian, doi:10.1137/120884122,Forrester2007Multifidelity,yan2012active,swersky2013multi, foumani2023multi}. The multi-fidelity literature considers settings with multiple oracle access channels
of differing costs, where they select which single fidelity to query at each
round and performance is typically measured as cumulative regret normalized by total
query cost \cite{Forrester2007Multifidelity}. Our work addresses a fundamentally different regime in which the prediction cost
is negligible relative to the ground-truth oracle, justifying the simultaneous
observation of both oracles at each online query. While multi-fidelity methods focus on optimizing
cost allocation across fidelities, they are not designed to improve regret relative to
Vanilla GP optimization when performance is measured solely by the number of
high-fidelity queries. In contrast, we leverage the low-fidelity oracle as side
information to tighten confidence bounds and achieve strictly improved cumulative
regret under a single-fidelity evaluation budget. Finally, unlike classical
multi-fidelity formulations that assume uniformly bounded bias between low- and
high-fidelity evaluations, our approach imposes no such bias constraint and instead
requires only nonzero correlation between the two oracle outputs.

\section{Proof of Theorem \ref{thm1}} 
\label{app:proofs}

\subsection{Decomposing the Estimation Error}
\begin{lemma}   [Error decomposition]\label{lem:error_decomp}
  Conditioned on previous online queries $\mathcal{D}_{t-1}$,
    
    we have \[\mu^{\text{PA}}_t(x)-f(x)\triangleq E_1(x)+E_2(x),\] 
    where 
    \[
    E_1(x) = \frac{\rho_t(x)\sigma_t^{\mathrm{true}}(x)}{\sigma_t^{\mathrm{ML}}(x)} \big(\mu_t^{\mathrm{ML},\text{all}}(x)-f^{\mathrm{ML}}(x) \big), \quad E_2(x)\sim \mathcal{N}(0,(\sigma_t^{\mathrm{true}}(x))^2(1-\rho_t(x)^2)),\]
     and $E_1 \perp E_2 \mid \mathcal{D}_{t-1}$.
\end{lemma}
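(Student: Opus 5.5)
The plan is to write $\mu^{\mathrm{PA}}_t(x)-f(x)$ as a telescoping sum and use the Gaussian conditional structure given $\mathcal{D}_{t-1}$. Abbreviate $c_t(x)\triangleq \rho_t(x)\sigma_t^{\mathrm{true}}(x)/\sigma_t^{\mathrm{ML}}(x)$. Adding and subtracting $c_t(x)f^{\mathrm{ML}}(x)$ in the definition (\ref{eq:mu_mla}) gives
\begin{equation*}
\mu^{\mathrm{PA}}_t(x)-f(x) = c_t(x)\bigl(\mu_t^{\mathrm{ML},\text{all}}(x)-f^{\mathrm{ML}}(x)\bigr) + \Bigl[\mu_t^{\mathrm{true}}(x)-f(x) - c_t(x)\bigl(\mu_t^{\mathrm{ML}}(x)-f^{\mathrm{ML}}(x)\bigr)\Bigr].
\end{equation*}
The first term is exactly $E_1(x)$, so we define $E_2(x)$ as the bracket. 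This is pure algebra.

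For the law of $E_2$: conditional on $\mathcal{D}_{t-1}$, the pair $(f(x),f^{\mathrm{ML}}(x))$ is bivariate Gaussian with means $\mu_t^{\mathrm{true}},\mu_t^{\mathrm{ML}}$, standard deviations $\sigma_t^{\mathrm{true}},\sigma_t^{\mathrm{ML}}$, and correlation $\rho_t$. Then $E_2=-\bigl[(f-\mu_t^{\mathrm{true}})-c_t(f^{\mathrm{ML}}-\mu_t^{\mathrm{ML}})\bigr]$ is a linear combination of centered jointly Gaussian variables, so it is centered Gaussian. Its variance is
\begin{equation*}
(\sigma^{\mathrm{true}}_t)^2 - 2c_t\rho_t\sigma_t^{\mathrm{true}}\sigma_t^{\mathrm{ML}} + c_t^2(\sigma_t^{\mathrm{ML}})^2 = (\sigma_t^{\mathrm{true}})^2(1-\rho_t^2).
\end{equation*}
This is the standard regression-residual identity: $c_t$ is the population regression coefficient of $f$ on $f^{\mathrm{ML}}$. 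Hence $E_2$ is uncorrelated with $f^{\mathrm{ML}}(x)$ conditional on $\mathcal{D}_{t-1}$, and by joint Gaussianity it is independent of it.

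The main step is the independence $E_1\perp E_2\mid\mathcal{D}_{t-1}$. Here $E_1$ involves $\mu_t^{\mathrm{ML},\text{all}}(x)$, which is random given $\mathcal{D}_{t-1}$ through the offline observations $\mathcal{D}^{\text{off}}$. I would argue that, conditional on $\mathcal{D}_{t-1}$, the vector $(f(x),f^{\mathrm{ML}}(x),\mathcal{D}^{\text{off}})$ is jointly Gaussian. The offline data are noisy evaluations of $f^{\mathrm{ML}}$ at the net points, with noise independent of everything else. So $E_1$ is an affine function of $f^{\mathrm{ML}}(x)$ and the offline data, and it suffices to show that $E_2$ is uncorrelated with $f^{\mathrm{ML}}(x)$ and with each offline observation $f^{\mathrm{ML}}(c_i)+\varepsilon_{ij}$.

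The first of these was shown above. The second requires that the online-only residual $f(x)-\mu_t^{\mathrm{true}}-c_t(f^{\mathrm{ML}}(x)-\mu_t^{\mathrm{ML}})$ be uncorrelated with $f^{\mathrm{ML}}(c_i)$. I would justify this by the separable (intrinsic coregionalization) structure $K\otimes B$. Write $f = \rho f^{\mathrm{ML}} + \sqrt{1-\rho^2}\,g$ with $g$ an independent GP with kernel $K$. Then verify, via the posterior given the co-located online data, that the residual of $f(x)$ after regressing on $f^{\mathrm{ML}}(x)$ is orthogonal to the whole $f^{\mathrm{ML}}$ process.

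This orthogonality is the step I expect to be the obstacle. In general it holds only approximately, or only when the online design is co-located, as it is here: both oracles are observed at each $x_t$. If the direct covariance computation fails, my fallback is to interpret the lemma's independence in the sense the paper uses it, namely that $E_1$ is driven by $f^{\mathrm{ML}}(x)$ and the offline noise. In that reading, orthogonality to $f^{\mathrm{ML}}(x)$ together with independence of the offline noise gives the claim.
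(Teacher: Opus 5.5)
Your decomposition, your law for $E_2$, and your reduction of $E_1\perp E_2\mid\mathcal{D}_{t-1}$ to $\Cov(E_2,f^{\mathrm{ML}}(c_i)\mid\mathcal{D}_{t-1})=0$ at the net points are all correct. The gap is that this orthogonality is false in general, so neither of your routes can close it.

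\textbf{Why the orthogonality fails.} In your decomposition $f=\rho f^{\mathrm{ML}}+\sqrt{1-\rho^2}\,g$, each online pair is equivalent to observing $y^{\mathrm{ML}}_s$ together with $z_s=g(x_s)+(\varepsilon_s-\rho\varepsilon^{\mathrm{ML}}_s)/\sqrt{1-\rho^2}$. These two noises have covariance $-\rho\eta_{\mathrm{ML}}^2/\sqrt{1-\rho^2}\neq 0$. So conditioning on $\mathcal{D}_{t-1}$ couples $g$ with $f^{\mathrm{ML}}$. Then $g(x)$ stays correlated with $f^{\mathrm{ML}}(z)$ through $g(x_s)$ and $f^{\mathrm{ML}}(x_s)$, even after conditioning on $f^{\mathrm{ML}}(x)$. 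Co-location does not remove this.

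\textbf{A concrete counterexample.} Take $t=2$ with one online point $x_1$, $k(u,u)=1$, and $D=(1+\eta^2)(1+\eta_{\mathrm{ML}}^2)-\rho^2$. A direct computation of the posterior covariances gives
\[
\Cov\bigl(E_2,f^{\mathrm{ML}}(z)\mid\mathcal{D}_1\bigr)=\frac{\rho(1-\rho^2)\,\eta_{\mathrm{ML}}^2}{D\,(\sigma_2^{\mathrm{ML}}(x))^2}\,k(x,x_1)\bigl(k(x_1,z)-k(x,x_1)\,k(x,z)\bigr).
\]
This is nonzero whenever $0<|\rho|<1$ and $\eta_{\mathrm{ML}}>0$, for example with $z=x_1$ and $0<|k(x,x_1)|<1$. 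With a single net center $z$ replicated $N$ times,
\[
\Cov\bigl(E_1,E_2\mid\mathcal{D}_1\bigr)=\frac{c_2(x)\,\Cov\bigl(f^{\mathrm{ML}}(x),f^{\mathrm{ML}}(z)\mid\mathcal{D}_1\bigr)}{(\sigma_2^{\mathrm{ML}}(z))^2+\eta_{\mathrm{ML}}^2/N}\,\Cov\bigl(f^{\mathrm{ML}}(z),E_2\mid\mathcal{D}_1\bigr).
\]
In the example $z=x_1$, every factor is nonzero and the product is strictly positive. So $E_1$ and $E_2$ are correlated jointly Gaussian variables and are not independent. The orthogonality holds only in special cases: $t=1$ (where $E_2=-\sqrt{1-\rho^2}\,g(x)$), $\eta_{\mathrm{ML}}=0$, or $\rho\in\{0,\pm1\}$.

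\textbf{Your fallback fails too.} $E_1$ is not a function of $f^{\mathrm{ML}}(x)$ and the offline noise alone. We have $\mu_t^{\mathrm{ML},\text{all}}(x)-\mu_t^{\mathrm{ML}}(x)=K(x)^\top S^{-1}r$, and $r$ contains $f^{\mathrm{ML}}(X^{\mathrm{off}})-\mu_t^{\mathrm{ML}}(X^{\mathrm{off}})$.

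\textbf{The paper's proof has the same gap.} Your fallback is exactly the step the paper takes. The bivariate conditional of $(f(x),f^{\mathrm{ML}}(x))$ given $\mathcal{D}_{t-1}$ only gives independence of $E_2$ from $f^{\mathrm{ML}}(x)$. The paper then asserts independence from ``the other terms'', including $\mu_t^{\mathrm{ML},\text{all}}(x)$, without further argument.

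\textbf{What is actually true.} The orthogonality you reduced to is equivalent to $\mu_t^{\mathrm{PA}}(x)=\E[f(x)\mid\mathcal{D}^{\mathrm{off}}\cup\mathcal{D}_{t-1}]$. In other words, it requires the control-variate correction to be the exact Bayesian update of $f(x)$, which generally fails. What survives is that $\mu_t^{\mathrm{PA}}(x)-f(x)$ is conditionally centered Gaussian given $\mathcal{D}_{t-1}$. Its variance, however, is $\Var(E_1)+\Var(E_2)+2\Cov(E_1,E_2)$. Since the cross term can be positive, the variance formula of Lemma \ref{prop:error_dist}, which rests on this lemma, can understate the true error variance.
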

\begin{proof}
    The following derivations are all conditioned on previous online queries $\mathcal{D}_{t-1}$, which  implies the posterior estimates are conditionally deterministic at time $t$. The posterior distribution of the joint GP at point $x$ is a bivariate Gaussian distribution, 
    \[
\begin{bmatrix}
f\\[2pt]
f^{\mathrm{ML}}
\end{bmatrix}
\sim 
\mathcal{N}\!\left(
\begin{bmatrix}
\mu_t^{\mathrm{true}}(x)\\[2pt]
\mu_t^{\mathrm{ML}}(x)
\end{bmatrix},\;
\begin{bmatrix}
(\sigma_t^{\mathrm{true}}(x))^2 & \rho_t(x)\sigma_t^{\mathrm{ML}}(x)\sigma_t^{\mathrm{true}}(x)\\
\rho_t(x)\sigma_t^{\mathrm{ML}}(x)\sigma_t^{\mathrm{true}}(x) & (\sigma_t^{\mathrm{ML}}(x))^2
\end{bmatrix} \right).
\]
    
    Applying the conditional distribution formula of bivariate Gaussian variables, we get that, 
    \[
    f(x)\mid f^{\mathrm{ML}}(x)=y \sim \mathcal{N}\left(\mu_t^{\mathrm{true}}(x)+\rho_t(x)\frac{\sigma_t^{\mathrm{true}}(x)}{\sigma_t^{\mathrm{ML}}(x)}(y-\mu_t^{\mathrm{ML}}(x)),  (\sigma_t^{\mathrm{true}}(x))^2(1-\rho_t(x)^2) \right),
    \]
which we can rewrite as, \[
f(x)=\mu_t^{\mathrm{true}}(x)+\rho_t(x)\frac{\sigma_t^{\mathrm{true}}(x)}{\sigma_t^{\mathrm{ML}}(x)}(f^{\mathrm{ML}}(x)-\mu_t^{\mathrm{ML}}(x)) + E_2(x),
\]
where $E_2(x) \sim \mathcal{N}(0,(\sigma_t^{\mathrm{true}}(x))^2(1-\rho_t(x)^2))$, and $E_2(x)$ is independent to the other terms.

Therefore, 
\begin{align*}
    \mu_t^{\text{PA}}(x)-f(x) = \rho_t(x)\frac{\sigma_t^{\mathrm{true}}(x)}{\sigma_t^{\mathrm{ML}}(x)}(\mu_t^{\mathrm{ML},\text{all}}(x)-f^{\mathrm{ML}}(x)) + E_2(x),
\end{align*}
with $E_2(x)$ defined above, and is independent of the other terms conditioned on previous online  queries.
\end{proof}

\begin{lemma}
\label{prop:error_dist}
    (Distribution of error) Conditioned on previous online data $\mathcal{D}_{t-1}$, we have
    \[
    \mu_t^{\text{PA}}(x)-f(x)\sim \mathcal{N}\left(0, (\sigma_t^{\text{PA}}(x))^2 \right),
    \]
    where 
    \[
    (\sigma_t^{\text{PA}}(x))^2 =   (\sigma_t^{\mathrm{true}}(x))^2\left[ (\rho_t(x))^2\frac{(\sigma_t^{\mathrm{ML},\text{all}}(x))^2}{(\sigma_t^{\mathrm{ML}}(x))^2} + \left(1-(\rho_t(x))^2\right)\right]\leq(\sigma_t^{\mathrm{true}}(x))^2 .
    \]
\end{lemma}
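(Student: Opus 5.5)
The plan is to start from the decomposition of Lemma \ref{lem:error_decomp}, which writes $\mu_t^{\mathrm{PA}}(x)-f(x)=E_1(x)+E_2(x)$ with $E_1\perp E_2\mid\mathcal{D}_{t-1}$. Here $E_2(x)\sim\mathcal{N}(0,(\sigma_t^{\mathrm{true}}(x))^2(1-\rho_t(x)^2))$. It then suffices to find the conditional law of $E_1(x)$ given $\mathcal{D}_{t-1}$, show it is a centered Gaussian, and add the variances.

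For $E_1(x)$, the coefficient $c_t(x)=\rho_t(x)\sigma_t^{\mathrm{true}}(x)/\sigma_t^{\mathrm{ML}}(x)$ is $\mathcal{D}_{t-1}$-measurable, so only the law of $\mu_t^{\mathrm{ML},\text{all}}(x)-f^{\mathrm{ML}}(x)$ is needed.

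1. By the Gaussian conditioning principle, conditional on $\mathcal{D}^{\text{off}}\cup\mathcal{D}_{t-1}$, we have $f^{\mathrm{ML}}(x)-\mu_t^{\mathrm{ML},\text{all}}(x)\sim\mathcal{N}(0,(\sigma_t^{\mathrm{ML},\text{all}}(x))^2)$.
2. This conditional law has zero mean, and its variance is deterministic: in a GP, posterior variances do not depend on the observed values, only on the query locations. For the offline grid this is clear because it is fixed in advance. For the online points it holds given $\mathcal{D}_{t-1}$, since the points are then fixed; the only subtlety is that the online design depends on the offline data through $\varphi_s$.
3. Integrating out $\mathcal{D}^{\text{off}}$ given $\mathcal{D}_{t-1}$ therefore leaves the same law, $\mathcal{N}(0,(\sigma_t^{\mathrm{ML},\text{all}}(x))^2)$. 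This is a mixture of identical Gaussians, hence that same Gaussian.

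It follows that $E_1(x)\mid\mathcal{D}_{t-1}\sim\mathcal{N}(0,c_t(x)^2(\sigma_t^{\mathrm{ML},\text{all}}(x))^2)$.

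Since $E_1$ and $E_2$ are conditionally independent centered Gaussians, their sum is centered Gaussian with variance
\[
(\sigma_t^{\mathrm{true}}(x))^2\Bigl[\rho_t(x)^2\tfrac{(\sigma_t^{\mathrm{ML},\text{all}}(x))^2}{(\sigma_t^{\mathrm{ML}}(x))^2}+1-\rho_t(x)^2\Bigr],
\]
which is the claimed formula. For the inequality, conditioning on the larger dataset $\mathcal{D}^{\text{off}}\cup\mathcal{D}_{t-1}$ cannot increase Gaussian posterior variance, by the Schur-complement monotonicity of GP posterior covariance. Hence $(\sigma_t^{\mathrm{ML},\text{all}}(x))^2\le(\sigma_t^{\mathrm{ML}}(x))^2$, the bracket is at most $\rho_t^2+1-\rho_t^2=1$, and the bound $(\sigma_t^{\mathrm{PA}}(x))^2\le(\sigma_t^{\mathrm{true}}(x))^2$ follows.

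The main obstacle is step 2, i.e. the conditioning subtlety in the law of $E_1$. Conditioning on $\mathcal{D}_{t-1}$ alone, rather than on the full data, must not change the distribution, yet the adaptive query points depend on the offline observations. I would handle this by writing the joint density sequentially and observing two facts: the posterior residual $f^{\mathrm{ML}}(x)-\mu_t^{\mathrm{ML},\text{all}}(x)$ is independent of all observed data given the design, and its variance is design-determined. If needed, I would also note that $\mu_t^{\mathrm{ML},\text{all}}-\mu_t^{\mathrm{ML}}$ is the additional information, so that $E_1$ is the orthogonal projection residual, and then argue via the tower property of Gaussian projections.
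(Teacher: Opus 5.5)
Your argument has the same architecture as the paper's. You split the error via Lemma~\ref{lem:error_decomp}, find the conditional law of $E_1$, and add variances using $E_1\perp E_2\mid\mathcal{D}_{t-1}$. The genuinely different ingredient is how you get the law of $E_1$.

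The paper writes $\mu_t^{\mathrm{ML},\text{all}}(x)=\mu_t^{\mathrm{ML}}(x)+K(x)^\top S^{-1}r$ with the offline innovation $r$. It asserts $r\sim\mathcal{N}(0,S)$ given $\mathcal{D}_{t-1}$ and expands the variance to get $(\sigma_t^{\mathrm{ML}}(x))^2-K(x)^\top S^{-1}K(x)=(\sigma_t^{\mathrm{ML},\text{all}}(x))^2$. You instead start from the full-data posterior residual, note that its variance is fixed by the query locations, and integrate out $\mathcal{D}^{\text{off}}$ by the tower property.

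Your route is shorter and, unlike the paper's, survives the adaptivity of the algorithm. Since $x_s$ maximizes $\varphi_s$, which depends on $\mu_s^{\mathrm{ML},\text{all}}$, the locations in $\mathcal{D}_{t-1}$ carry information about $\mathcal{D}^{\text{off}}$. So $r$ is not $\mathcal{N}(0,S)$ given $\mathcal{D}_{t-1}$, yet the sum $\mu_t^{\mathrm{ML},\text{all}}(x)-f^{\mathrm{ML}}(x)$ does have the law you derive. The paper's computation, in exchange, gives the explicit sequential-update formula for $\sigma_t^{\mathrm{ML},\text{all}}$. You also prove the final inequality, which the paper leaves implicit.

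However, the step that carries the weight is the one you import, and it fails in general; this gap is shared with the paper. The proof of Lemma~\ref{lem:error_decomp} only shows that $E_2$ is independent of $f^{\mathrm{ML}}(x)$ given $\mathcal{D}_{t-1}$. But $E_1$ also depends on the offline observations through $\mu_t^{\mathrm{ML},\text{all}}(x)$, so $E_1\perp E_2$ needs $\Cov(f(x),f^{\mathrm{ML}}(X^{\mathrm{off}})\mid f^{\mathrm{ML}}(x),\mathcal{D}_{t-1})=0$.

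That holds at $t=1$ (write $f=\rho f^{\mathrm{ML}}+\sqrt{1-\rho^2}\,\xi$) or when $\eta_{\mathrm{ML}}=0$. For $t\ge 2$, however, the noisy pairs $(y_s,y_s^{\mathrm{ML}})$ couple $\xi$ and $f^{\mathrm{ML}}$ a posteriori. With one online point $x_1$, one offline point $c$, and scalar kernel $k$, a direct computation gives
\[
\Cov\bigl(f(x),f^{\mathrm{ML}}(c)\mid f^{\mathrm{ML}}(x),\mathcal{D}_1\bigr)=\frac{\rho(1-\rho^2)\,\eta_{\mathrm{ML}}^2\,k(x,x_1)\bigl[k(x,x_1)k(x,c)-k(x_1,c)k(x,x)\bigr]}{\det(S_1)\,(\sigma_2^{\mathrm{ML}}(x))^2},
\]
where $S_1$ is the covariance matrix of $(y_1,y_1^{\mathrm{ML}})$. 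This is generically nonzero.

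For instance, take the unit-lengthscale RBF kernel, $x_1=0$, $x=1/2$, $c=1$, $\rho=0.8$, $\eta^2=0.01$, $\eta_{\mathrm{ML}}^2=1$, $N=1$, with this design held fixed. Then $\mu_2^{\mathrm{PA}}(x)-f(x)$ is centered Gaussian given $\mathcal{D}_1$ with variance about $0.186$, while the stated formula gives about $0.204$. The difference is exactly $2\Cov(E_1,E_2\mid\mathcal{D}_1)\neq 0$.

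In addition, the adaptivity issue you handle for $E_1$ reappears on the imported side. When the $x_s$ depend on $\mathcal{D}^{\text{off}}$, the law of $(f(x),f^{\mathrm{ML}}(x))$ given $\mathcal{D}_{t-1}$ is not the online-only posterior that defines $\mu_t^{\mathrm{true}}$, $\rho_t$ and $E_2$.

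So your proof is at least as sound as the paper's, but neither establishes the stated identity. Your tower-property argument would go through verbatim only for the full-data posterior mean $\E[f(x)\mid\mathcal{D}^{\text{off}}\cup\mathcal{D}_{t-1}]$, and its error variance in general differs from $(\sigma_t^{\mathrm{PA}}(x))^2$.
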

\begin{proof}
    We first analyze the distribution of $E_1$. From Lemma \ref{lem:error_decomp} we know that $ E_1 = \frac{\rho_t(x)\sigma_t^{\mathrm{true}}(x)}{\sigma_t^{\mathrm{ML}}(x)} \big(\mu_t^{\mathrm{ML},\text{all}}(x)-f^{\mathrm{ML}}(x) \big)$ . 

    The following arguments are all conditional on online data $\mathcal{D}_{t-1}$.  Let
$X^{\mathrm{off}}$ denote the points queried in the offline stage 
and $y^{\mathrm{ML},\mathrm{off}} \in \mathbb{R}^{n_{\mathrm{off}}}$ denote the
offline prediction observations with
$y^{\mathrm{ML},\mathrm{off}}=f^{\mathrm{ML}}(X^{\mathrm{off}})+\boldsymbol{\varepsilon}$,
$\boldsymbol{\varepsilon}\sim\mathcal{N}(0,\eta_{\mathrm{ML}}^2 I)$ independent.
Define
\[
\mu_t^{\mathrm{ML},\mathrm{off}\mid\mathrm{on}}
\triangleq\mathbb{E}[y^{\mathrm{ML},\mathrm{off}}\mid \mathcal{D}_{t-1}],
\qquad
r\triangleq y^{\mathrm{ML},\mathrm{off}}-\mu_t^{\mathrm{ML},\mathrm{off}\mid\mathrm{on}}.
\]
Let
\[
K(x)\triangleq\mathrm{Cov}\!\big(f^{\mathrm{ML}}(x),f^{\mathrm{ML}}(X^{\mathrm{off}})\mid \mathcal{D}_{t-1}\big),
\qquad
S\triangleq\mathrm{Cov}\!\big(f^{\mathrm{ML}}(X^{\mathrm{off}}),f^{\mathrm{ML}}(X^{\mathrm{off}})\mid \mathcal{D}_{t-1}\big)+\eta_{\mathrm{ML}}^2 I.
\]
Then $r\sim\mathcal{N}(0,S)$ and the posterior after adding
$\mathcal{D}^{\mathrm{off}}$ satisfies
\[
\mu_t^{\mathrm{ML},\mathrm{all}}(x)=\mu_t^{\mathrm{ML}}(x)+K(x)^\top S^{-1}r.
\]

    Thus, \begin{align*}
   \mu_t^{\mathrm{ML},\text{all}}- f^{\mathrm{ML}}(x) = K(x)^\intercal S^{-1}r+(\mu_t^{\mathrm{ML}}(x)- f^{\mathrm{ML}}(x) ).
\end{align*}
Now compute the following conditional on $\mathcal{D}_{t-1}$,
\begin{align*}
\Var&\left( \mu_t^{\mathrm{ML},\text{all}}(x)- f^{\mathrm{ML}}(x)
\right)    = \Var(K(x)^\intercal S^{-1}r+\mu_t^{\mathrm{ML}}(x)- f^{\mathrm{ML}}(x) )\\
& = \Var(\mu_t^{\mathrm{ML}}(x)- f^{\mathrm{ML}}(x)) - 2\Cov (\mu_t^{\mathrm{ML}}(x)- f^{\mathrm{ML}}(x) ,K(x)^\intercal S^{-1}r) + \Var(K(x)^\intercal S^{-1}r)\\
& = (\sigma_t^{\mathrm{ML}}(x))^2 - 2K(x)S^{-1}K(x)^\intercal + K(x)S^{-1}SS^{-1}K(x)^\intercal \\
& = (\sigma_t^{\mathrm{ML}}(x))^2 - K(x)S^{-1}K(x)^\intercal.
\end{align*}
The resulting expression, by the Bayesian updating formula, is exactly $\Var(f^{\mathrm{ML}}(x)\mid \mathcal{D}_{t-1}\cup \mathcal{D}^{\text{off}})\triangleq(\sigma_t^{\mathrm{ML},\text{all}}(x))^2$. Moreover, note that $\mu_t^{\mathrm{ML},\text{all}}(x)- f^{\mathrm{ML}}(x)$ has mean $0$ because both $r$ and $\mu_t^{\mathrm{ML}}(x)- f^{\mathrm{ML}}(x)$ have mean $0$.

Thus, we get that 
\[
E_1(x) \sim \mathcal{N}\left(0, \left[\rho_t(x)\sigma_t^{\mathrm{true}}(x)\frac{\sigma_t^{\mathrm{ML},\text{all}}(x)}{\sigma_t^{\mathrm{ML}}(x)} \right]^2\right).
\]

Finally, since $E_1(x)$ and $E_2(x)$ are independent, the resulting error distribution in the Lemma follows.


\end{proof}

\subsection{Sufficient Conditions on Offline Data}
\begin{lemma} (Requirement on offline data)\label{lem:offline_req} Let $L=b\sqrt{\log(da/\delta)}$, and if 
\[
\varepsilon \leq \sqrt{\frac{(\sigma_{\min}^{\mathrm{ML}}(T))^2\,R}{2L^2d^2 }}  \text{  and  }  N\geq \frac{2\eta_{\mathrm{ML}}^2}{(\sigma_{\min}^{\mathrm{ML}}(T))^2R},
\]
then with probability at least $1-\delta$, 
\[
\left(\frac{\sigma_t^{\mathrm{ML},\text{all}}(x)}{\sigma_t^{\mathrm{ML}}(x)}\right)^2\leq R  \quad \forall x\in \mathcal X,t\in [T].
 \]

 Where 
 \[(\sigma_{\min}^{\mathrm{ML}}(T))^2\triangleq \frac{\frac{1}{1-\rho^2}+\frac{T}{\eta^2}}{\frac{T^2}{\eta_{\mathrm{ML}}^2\eta^2} +\left(\frac{1}{\eta_{\mathrm{ML}}^2}+\frac{1}{\eta^2}\right)\frac{T}{K_{\min}(1-\rho^2)}.
 +\frac{1}{K_{\min}^2}(1-\rho^2) }, \text{ and } K_{\min}\triangleq {\min}_x K(x,x).\]
\end{lemma}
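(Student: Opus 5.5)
Our plan is to split the ratio bound into a numerator upper bound and a denominator lower bound. We show that $(\sigma_t^{\mathrm{ML},\text{all}}(x))^2 \le (\sigma^{\mathrm{ML}}_{\min}(T))^2 R$ uniformly, and that $(\sigma_t^{\mathrm{ML}}(x))^2 \ge (\sigma^{\mathrm{ML}}_{\min}(T))^2$ uniformly in $x\in\mathcal X$ and $t\le T$.

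\textbf{Denominator.} We lower bound the online-only posterior variance of $f^{\mathrm{ML}}(x)$ by a worst-case configuration. Conditioning on more data, or on less noisy data, never increases posterior variance. So the variance at $x$ after $t-1\le T$ online queries is at least the variance obtained when all $T$ online queries are placed at $x$ itself and are fully correlated with it. This reduces to a two-dimensional problem. The prior of $(f(x),f^{\mathrm{ML}}(x))$ is $K(x,x)B$. We observe $T$ independent copies of $f(x)+\varepsilon$ and of $f^{\mathrm{ML}}(x)+\varepsilon_{\mathrm{ML}}$, equivalently one observation of each with noise $\eta^2/T$ and $\eta_{\mathrm{ML}}^2/T$. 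The posterior precision matrix is $(K(x,x)B)^{-1}+\mathrm{diag}(T/\eta^2,T/\eta_{\mathrm{ML}}^2)$. Inverting this $2\times2$ matrix and taking the $(2,2)$ entry gives $\Sigma_{22}=(P_{11})/\det P$. Using $B^{-1}=\frac{1}{1-\rho^2}\begin{bmatrix}1&-\rho\\-\rho&1\end{bmatrix}$, expanding, and bounding $K(x,x)$ below by $K_{\min}$ yields an expression of the stated form for $(\sigma^{\mathrm{ML}}_{\min}(T))^2$. We must check that the expression is monotone in $K(x,x)$ in the right direction, since $K(x,x)\le 1$ is also available. The placement-at-$x$ domination step also needs care, because data at other points could in principle be more informative. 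We will argue it with the standard fact that a single exact-correlation observation at $x$ with the same noise dominates an observation at $x'$ in the Loewner order.

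\textbf{Numerator.} The posterior variance after adding offline data is at most the posterior given only the offline data at the grid center $c$ nearest $x$, where $\|x-c\|_\infty\le\varepsilon$. Averaging the $N$ replicates gives an observation $\bar y=f^{\mathrm{ML}}(c)+\bar\varepsilon$ with noise $\eta_{\mathrm{ML}}^2/N$. We use the linear predictor $\bar y$ for $f^{\mathrm{ML}}(x)$; the posterior mean has mean squared error no larger than that of any such estimator under the Gaussian model. Then
\[
\E[(f^{\mathrm{ML}}(x)-\bar y)^2]\le 2\E[(f^{\mathrm{ML}}(x)-f^{\mathrm{ML}}(c))^2]+2\eta_{\mathrm{ML}}^2/N .
\]
On the event of Assumption~\ref{assumption:lip} with $L=b\sqrt{\log(da/\delta)}$, which has probability at least $1-\delta$, the first term is at most $2L^2\|x-c\|_1^2\le 2L^2d^2\varepsilon^2$. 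Each piece is then bounded by half of $(\sigma^{\mathrm{ML}}_{\min})^2R$ under the stated conditions on $\varepsilon$ and $N$. There is some looseness in the constants (factor 2 versus the stated factors) that we would reconcile. The extra online data only decreases $\sigma^{\mathrm{ML},\text{all}}_t$ further, so the bound holds for all $t$.

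\textbf{Main obstacle.} The main obstacle is rigor in the numerator step. Posterior variance is a deterministic quantity, while the Lipschitz bound is a high-probability statement about the random $f^{\mathrm{ML}}$, so one cannot directly plug a pathwise bound into a variance. Our first approach would be to bound the kernel-based quantity $\E[(f^{\mathrm{ML}}(x)-f^{\mathrm{ML}}(c))^2]=2K(x,x)-2K(x,c)$ (for a stationary-like kernel) by $L^2d^2\varepsilon^2$, interpreting Assumption~\ref{assumption:lip} as controlling kernel smoothness. We would state the high-probability qualifier as the paper does, accepting that this matches the heuristic level of the statement.
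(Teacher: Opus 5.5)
Your proposal follows the paper's proof essentially step for step. It uses the same split into a uniform upper bound on $(\sigma_t^{\mathrm{ML},\text{all}}(x))^2$ and a uniform lower bound on $(\sigma_t^{\mathrm{ML}}(x))^2$. The denominator uses the same ``all queries at $x$'' worst case through the $2\times 2$ precision matrix; the paper avoids your monotonicity question by using $K(x,x)\le 1$ in the numerator and $K(x,x)\ge K_{\min}$ in the denominator separately. The numerator uses the same replicate-averaging estimator at the nearest grid center, combined with kriging/BLUP optimality and the Lipschitz event.

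The one real slip is your factor $2$. Because $\bar\varepsilon$ has mean zero and is independent of $f^{\mathrm{ML}}$, the cross term vanishes, so $\E[(f^{\mathrm{ML}}(x)-\bar y)^2]=\E[(f^{\mathrm{ML}}(x)-f^{\mathrm{ML}}(c))^2]+\eta_{\mathrm{ML}}^2/N$ exactly. That is how the paper gets each term below $\tfrac12(\sigma^{\mathrm{ML}}_{\min}(T))^2R$. With your inequality you would only obtain $2R$.

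Your worry about inserting a pathwise Lipschitz bound into a deterministic posterior variance is well founded. The paper does exactly this: it takes the MSE conditional on $f^{\mathrm{ML}}$ and then invokes BLUP optimality, which concerns the unconditional MSE. Your kernel-based repair can be completed from Assumption~\ref{assumption:lip} itself. Set $G=\max_j\sup_x|\partial f^{\mathrm{ML}}/\partial x_j|$. Then $\E[(f^{\mathrm{ML}}(x)-f^{\mathrm{ML}}(c))^2]\le d^2\varepsilon^2\,\E[G^2]$. Integrating the tail $\mathbb{P}[G>s]\le da\,e^{-(s/b)^2}$ gives $\E[G^2]\le b^2(1+\log(da))\le L^2$ whenever $da\ge 1$ and $\delta\le 1/e$, and the conclusion then holds deterministically.

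The Loewner-domination fact you plan to use for the denominator holds when $|K(x,x')|\le K(x,x)$, for example for stationary kernels. By separability, $(f,f^{\mathrm{ML}})(x')=\alpha\,(f,f^{\mathrm{ML}})(x)+U$ with $\alpha=K(x,x')/K(x,x)$ and $U$ independent. Hence $t-1\le T$ online pairs add at most $T\,\mathrm{diag}(\eta^{-2},\eta_{\mathrm{ML}}^{-2})$ to the posterior precision. For kernels with non-constant diagonal the fact can fail; the paper uses this step without comment.

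Finally, when you expand $\det P$, the constant term is $\frac{1}{K(x,x)^2(1-\rho^2)}$, not the printed $\frac{1-\rho^2}{K_{\min}^2}$. The paper's display also drops a factor $1/K(x,x)$ in the cross term. The printed $(\sigma^{\mathrm{ML}}_{\min}(T))^2$ is not a valid lower bound in general. For $K\equiv 1$, $T=1$, $\rho^2=0.99$, $\eta^2=100$, $\eta_{\mathrm{ML}}^2=2$ it is about $1.96$, above the prior variance. So your argument, like the paper's, proves the lemma with the corrected constant.
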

\begin{proof}
The online stage of the algorithm collects $T$ samples of $\{y^{\mathrm{ML}}(x_t),y(x_t)\}_{t\in [T]}$ after running for $T$ rounds. $\sigma_t^{\mathrm{ML}}(x)$ is monotonically non-increasing in $t$ for all $x\in \mathcal{X}$. Note that at any fixed point $x$, the minimum posterior variance after $T$ rounds is at least as big as when all $T$ queries are made at this point. In other words:
\begin{align*}
    \min_x\;(\sigma_T^{\mathrm{ML}}(x)) ^2 &\geq \min _x    \underbrace{\Var\left(f^{\mathrm{ML}}(x)\mid T \text{ samples of } \left[y(x),\,\, y^{\mathrm{ML}}(x)\right]\right)}_{\triangleq (\sigma^{\mathrm{ML}}(T,x))^2}.
\end{align*}
From the posterior formula, we get that \[(\sigma^{\mathrm{ML}}(T,x))^2= 
\left[\frac{1}{K(x,x)}B^{-1}+ \text{diag}\left(\frac{\eta^2}{T},\frac{\eta_{\mathrm{ML}}^2}{T}\right)^{-1}\right]^{-1}_{2,2}.
\]
Compute the individual inverses,
\[
B^{-1}=\frac{1}{1-\rho^2}\begin{bmatrix}
1 & -\rho\\
-\rho & 1
\end{bmatrix},\quad \text{diag}\left(\frac{\eta^2}{T},\frac{\eta_{\mathrm{ML}}^2}{T}\right)^{-1}=\begin{bmatrix}
\frac{T}{\eta^2} & 0\\
0 & \frac{T}{\eta_{\mathrm{ML}}^2}
\end{bmatrix}.
\]
Substituting them back, we get that 
\begin{align*}
   \min_x (\sigma_T^{\mathrm{ML}}(x))^2 &\geq\min_x(\sigma^{\mathrm{ML}}(T,x))^2\\
    & = \min_x \frac{\frac{1}{K(x,x)}\frac{1}{1-\rho^2}+\frac{T}{\eta^2}}{\left( \frac{T}{\eta_{\mathrm{ML}}^2}+\frac{1}{K(x,x)}\frac{1}{1-\rho^2}\right)\left(\frac{T}{\eta^2}+\frac{1}{K(x,x)}\frac{1}{1-\rho^2}\right) - \left(\frac{\rho}{1-\rho^2}\right)^2}\\
    & \geq \frac{\frac{1}{1-\rho^2}+\frac{T}{\eta^2}}{\frac{T^2}{\eta_{\mathrm{ML}}^2\eta^2} +\left(\frac{1}{\eta_{\mathrm{ML}}^2}+\frac{1}{\eta^2}\ \right)\frac{T}{K_{\min}(1-\rho^2)} +\frac{1}{K_{\min}^2}(1-\rho^2) }\triangleq (\sigma_{\min}^{\mathrm{ML}}(T))^2,
\end{align*}
where we define $K_{\min}\triangleq {\min}_x K(x,x)>0$ by assumption, and the last inequality is true from the assumption of $0<K\leq 1$.

Now, we upper bound $(\sigma_t^{\mathrm{ML},\text{all}}(x))^2$. Since we cannot control what exactly is queried during the online stage of the algorithm, we use the offline stage to ensure an upper bound on it. In the offline stage, we construct an $\varepsilon$-net of size $M$ over the domain $\mathcal{X}$, and at the center of each cell (denote cell as $C_i$ and center as $c_i$) query $N$ samples. Consider the following averaging estimator for any $x\in C_i$:
\begin{align*}
    \hat{f}(x)=\frac{1}{N}\sum_{j=1}^Ny^{\mathrm{ML}}(c_i) = f^{\mathrm{ML}}(c_i)+\hat{\varepsilon}, \quad \hat{\varepsilon}\sim \mathcal{N}(0,\frac{\eta_{\mathrm{ML}}^2}{N}),
\end{align*}
where $\hat{\varepsilon}$ is independent to $f^{\mathrm{ML}}(c_i)$. We can compute the mean squared error of this estimator for all $x\in  C_i$ and for all $i\in[M]$,
\begin{align*}
   \text{MSE}(\hat{f},x)&=\E[(\hat{f}(x)-f^{\mathrm{ML}}(x))^2]=({f^{\mathrm{ML}}}(c_i)-f^{\mathrm{ML}}(x))^2 + \frac{\eta_{\mathrm{ML}}^2}{N}\\
 & \leq L^2d^2\varepsilon^2 +\frac{\eta_{\mathrm{ML}}^2}{N} = b^2 \log(da/\delta)d^2 \varepsilon^2 + \frac{\eta_{\mathrm{ML}}^2}{N}.
\end{align*}
The second inequality is conditional on $L$-Lipschitzness of $f^{\mathrm{ML}}$ and $f$ as stated in Assumption \ref{assumption:lip}, which happens with probability at least $1-\delta$ when $L=b\sqrt{\log(da/\delta)}$.

It is well-known that Gaussian posterior updating of a function sampled from a zero mean GP (Kriging) is a Best Linear Unbiased Predictor (BLUP), i.e. with minimum mean squared error \cite{santner2003design}. Since we maintain the GPs during the algorithm through kriging, we get that
\begin{align*}
   (\sigma_t^{\mathrm{ML},\text{all}}(x))^2 \leq  (\sigma^{\text{off}}(x))^2 \leq   \text{MSE}(\hat{f},x),\quad \forall x\in\mathcal{X},\,t\in[T].
\end{align*}
The first inequality is true because conditioning on more data can only weakly decrease the posterior variance at all points in the domain. 

The result in this Lemma then follows from combining the upper bound on $(\sigma_t^{\mathrm{ML},\text{all}}(x))^2$ and the lower bound on $(\sigma_t^{\mathrm{ML}}(x))^2$.

\end{proof}

\subsection{Bounding the Instantaneous Regret}
\begin{lemma} [Gaussian concentration]\label{gaussian_conc} For ${\delta}\in(0,1)$ and set $\beta_t' = 2\log(2|\mathcal{X}_t|\pi_t/\delta)$, with $\sum_{t\in[T]}\pi_t^{-1}\leq 1$ and $\pi_t>0$. Then
    \[
    |f(x)-\mu_{t}^{\text{PA}}(x)|\leq \sqrt{\beta_t'} \sigma_{t}^{\text{PA}}(x) \quad \forall x\in \mathcal{X}_t\subset \mathcal X, \forall t\geq 1,
    \] with probability at least $1-\delta/2$.
\end{lemma}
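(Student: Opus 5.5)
The plan is the standard Gaussian-tail-plus-union-bound argument of \cite{Srinivas2010} (their Lemma 5.5/5.6), with the vanilla posterior replaced by the prediction-augmented error law of Lemma \ref{prop:error_dist}.

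\textbf{Step 1: fixed $t$ and $x$.} Condition on $\mathcal{D}_{t-1}$. By Lemma \ref{prop:error_dist}, $\mu_t^{\text{PA}}(x)-f(x)\sim\mathcal{N}(0,(\sigma_t^{\text{PA}}(x))^2)$. Since $\sigma_t^{\text{PA}}(x)\geq\sigma_t^{\mathrm{true}}(x)\sqrt{1-\rho_t(x)^2}$, this variance is positive whenever $|\rho_t(x)|<1$. In the degenerate case the error is zero almost surely and the bound holds trivially.

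Let $Z=(f(x)-\mu_t^{\text{PA}}(x))/\sigma_t^{\text{PA}}(x)$, so $Z$ is standard normal given $\mathcal{D}_{t-1}$. The tail bound $\mathbb{P}(|Z|>c)\leq e^{-c^2/2}$ for $c>0$ then gives
\[
\mathbb{P}\bigl(|f(x)-\mu_t^{\text{PA}}(x)|>\sqrt{\beta_t'}\,\sigma_t^{\text{PA}}(x)\,\big|\,\mathcal{D}_{t-1}\bigr)\leq e^{-\beta_t'/2}=\frac{\delta}{2|\mathcal{X}_t|\pi_t}.
\]
The tail bound itself is proved as in \cite{Srinivas2010}: write $\mathbb{P}(Z>c)=e^{-c^2/2}(2\pi)^{-1/2}\int_0^\infty e^{-(z)^2/2-cz}\,dz\leq \tfrac12 e^{-c^2/2}$, then double it for the two-sided event.

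\textbf{Step 2: integrate out and union bound.} The conditional bound does not depend on $\mathcal{D}_{t-1}$, so it also holds unconditionally. A union bound over the finite set $\mathcal{X}_t$ costs a factor $|\mathcal{X}_t|$, and a union bound over $t\geq1$ uses $\sum_t\pi_t^{-1}\leq1$. The total failure probability is therefore at most $\sum_t \delta/(2\pi_t)\leq\delta/2$.

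\textbf{Main obstacle: measurability of $\mathcal{X}_t$.} For the conditional step to be legitimate, $\mathcal{X}_t$ must be fixed in advance, or at least $\mathcal{D}_{t-1}$-measurable. Here $\mathcal{X}_t$ is meant to be a deterministic discretization of $\mathcal{X}$, e.g.\ a uniform grid of size $\tau_t^d$, so this holds.

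A second subtlety is that the offline data enter $\mu_t^{\mathrm{ML},\text{all}}$. Lemma \ref{prop:error_dist} already treats $y^{\mathrm{ML,off}}$ as random given the online data, so conditioning only on $\mathcal{D}_{t-1}$ is exactly what that lemma supplies. The only thing to check is that the error is computed jointly over the randomness of $f$, $f^{\mathrm{ML}}$ and the offline noise, which is the law stated in Lemma \ref{prop:error_dist}. With that confirmed, the argument closes.
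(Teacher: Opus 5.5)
Your proposal is correct and follows the paper's proof essentially step for step. Both take the conditional law $\mu_t^{\text{PA}}(x)-f(x)\mid\mathcal{D}_{t-1}\sim\mathcal{N}(0,(\sigma_t^{\text{PA}}(x))^2)$ from Lemma~\ref{prop:error_dist}, apply the tail bound $\mathbb{P}(|z|>c)\le e^{-c^2/2}$, union bound over $\mathcal{X}_t$ to get failure probability $\delta/(2\pi_t)$, and then union bound over $t$ via $\sum_t\pi_t^{-1}\le 1$; your remarks on degenerate variance, on $\mathcal{X}_t$ being a fixed grid, and on removing the conditioning only make explicit what the paper leaves implicit. One caution on your ``second subtlety'': you discharge it only by citing Lemma~\ref{prop:error_dist}, but the online points $x_1,\dots,x_{t-1}$ are chosen through $\mu^{\mathrm{ML},\text{all}}$ and so depend on $\mathcal{D}^{\text{off}}$, which means conditioning on $\mathcal{D}_{t-1}$ alone does not actually give the fixed-design Gaussian law that lemma asserts; this weakness is inherited from the paper's argument rather than introduced by yours.
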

\begin{proof}
       Note that for any $z \sim \mathcal{N}(0,1)$ \cite{vershynin2009high}, 
    \[
    \mathbb{P}\{|z|>c\}\leq  \exp\left(-\frac{c^2}{2} \right) .   \]

       Fix $t\in [T]$ and $x\in \mathcal{X}_t\subset \mathcal X$. From Lemma \ref{prop:error_dist} we know that $f-\mu_{t}^{\text{PA}}\mid \mathcal{D}_{t-1}\sim \mathcal{N}\left(0, (\sigma_t^{\text{PA}}(x))^2 \right)$. Thus, applying the Chernoff bound, and union bounding over all points in $\mathcal{X}_t$, we get that 
    \[
\mathbb{P}\left\{ \exists x\in \mathcal{X}_t :
\frac{|f(x)-\mu_t^{\mathrm{PA}}(x)|}{\sigma_t^{\mathrm{PA}}(x)} > \sqrt{\beta_t'}
\ \Bigm|\ \mathcal{D}_{t-1}\right\}
\leq |\mathcal{X}_t|\exp\!\left(-\frac{\beta_t'}{2}\right)\leq \frac{\delta}{2\pi_t}.
\]

    Further union bounding over all $t\in[T]$ gives the final result since we require that $\sum_{t\in[T]}\pi_t^{-1}\leq 1$.

\end{proof}

At each round $t\in[T]$, we construct an $\varepsilon$-net of $\mathcal X_t\subset \mathcal X$ and leverage the Lipschitzness assumption \ref{assumption:lip} to upper bound the instantaneous regret. The $\varepsilon$-net is defined solely for the purpose of analysis. This is a technique similarly used in the analysis of \cite{Srinivas2010}, and we cite a useful result from it below.  
\begin{lemma}[\citet{Srinivas2010} Lemma 5.7] \label{lem:discrete}  For $\delta\in(0,1)$, set $\beta_t=2\log(4\pi_t/\delta)+4d\log(dtbr\sqrt{\log(4da/\delta)})$, where $\sum_{t\geq1}\pi_t^{-1}=1$, $\pi_t>0$. Construct a $\varepsilon-$net $\mathcal{X}_t$ of size $|\mathcal{X}_t|=(dt^2br\sqrt{\log(4da/\delta)})^d$. Let $[x^*]_t$ denote the closest point in $\mathcal{X}_t$ to $x^*$. Then,
    \[
   \left |f(x^*)-\mu_{t}^{\text{PA}}([x^*]_t)\right| \leq \sqrt{\beta_t}
   \,\sigma_{t}^{\text{PA}}([x^*]_t)+\frac{1}{t^2}, \quad \forall t\in[T],
    \]
    with probability at least $1-\delta/2$.
\end{lemma}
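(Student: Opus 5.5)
The proof follows Srinivas et al.'s Lemma 5.7 and combines two ingredients. The first is Lipschitz control of $f$ moving from $x^*$ to its nearest net point $[x^*]_t$. The second is Gaussian concentration of $f-\mu_t^{\mathrm{PA}}$ on the finite net $\mathcal X_t$ from Lemma~\ref{gaussian_conc}. We split the failure probability $\delta$ into two halves: $\delta/4$ for the Lipschitz event of Assumption~\ref{assumption:lip} and $\delta/4$ for the concentration event. This is why $\log(4da/\delta)$ appears in the net size.

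\textbf{Step 1 (Lipschitz step).} Set $L=b\sqrt{\log(4da/\delta)}$. By Assumption~\ref{assumption:lip} and a union bound over the $d$ coordinates, with probability at least $1-\delta/4$ we have $|f(x)-f(x')|\le L\|x-x'\|_1$ for all $x,x'\in\mathcal X$. Take $\mathcal X_t$ to be a uniform grid with $\tau_t=dt^2br\sqrt{\log(4da/\delta)}$ points per coordinate, so $|\mathcal X_t|=\tau_t^d$. Each coordinate of $x^*$ is then within $r/\tau_t$ of the grid, so
\[
\|x^*-[x^*]_t\|_1\le rd/\tau_t,\qquad |f(x^*)-f([x^*]_t)|\le \frac{Lrd}{\tau_t}=\frac{1}{t^2}.
\]
This step does not involve the posterior at all, so it transfers verbatim from Srinivas et al.

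\textbf{Step 2 (concentration on the net).} Apply Lemma~\ref{gaussian_conc} with $\delta$ replaced by $\delta/2$, so that it holds with probability $1-\delta/4$. Its threshold is then $\beta_t'=2\log(4|\mathcal X_t|\pi_t/\delta)=2\log(4\pi_t/\delta)+2d\log\tau_t$. Since $\log\tau_t=\log(dtbr\sqrt{\log(4da/\delta)})+\log t\le 2\log(dtbr\sqrt{\log(4da/\delta)})$ (assuming the argument is at least $t$, which holds for the relevant constants), we get $\beta_t'\le\beta_t$. Hence $|f([x^*]_t)-\mu_t^{\mathrm{PA}}([x^*]_t)|\le\sqrt{\beta_t}\,\sigma_t^{\mathrm{PA}}([x^*]_t)$ for all $t$.

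One point needs care: Lemma~\ref{gaussian_conc} uses the conditional law $\mathcal N(0,(\sigma_t^{\mathrm{PA}})^2)$ from Lemma~\ref{prop:error_dist}. This law is valid because $\mathcal X_t$ is deterministic given $\mathcal D_{t-1}$. So the union bound is taken conditionally and then averaged, exactly as in that lemma.

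\textbf{Step 3 (combine).} The triangle inequality on the intersection of the two events gives the claimed bound with total failure probability at most $\delta/2$. The main obstacle is the bookkeeping of constants: checking that $2d\log\tau_t$ is dominated by $4d\log(dtbr\sqrt{\log(4da/\delta)})$, and that $\delta$ is split consistently between the Lipschitz event and the concentration event. If the clean inequality $\log\tau_t\le 2\log(\cdot)$ fails for small $t$, I would enlarge $\beta_t$ by a constant or absorb the discrepancy into the $\pi_t$ choice, since only $\beta_t'\le\beta_t$ is needed.
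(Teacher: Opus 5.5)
Your proposal is correct and matches the paper's proof sketch step for step. It uses the same $\delta/4$ Lipschitz event with $L=b\sqrt{\log(4da/\delta)}$, the same grid with $\tau_t=dt^2br\sqrt{\log(4da/\delta)}$ points per coordinate giving the $1/t^2$ discretization error, and Lemma~\ref{gaussian_conc} applied at level $\delta/2$ so that $\beta_t'=2\log(4|\mathcal X_t|\pi_t/\delta)\le\beta_t$. You are also right that this last inequality needs $dbr\sqrt{\log(4da/\delta)}\ge 1$, a condition the paper, like Srinivas et al., leaves implicit; note it is independent of $t$, so it constrains the constants rather than being a small-$t$ issue, and if it fails it must be fixed by changing $\beta_t$, not $\pi_t$.
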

\begin{proof}[Proof Sketch of \cref{lem:discrete}]
    We sketch out the proof of this  Lemma. Readers interested in the details of the proof can refer to \cite{Srinivas2010}. We first leverage Assumption \ref{assumption:lip}, which happens with probability at least $1-\delta/4$ when $L = b\sqrt{\log(4da/\delta)}$. By Lipschitzness, one can determine an appropriate $\varepsilon$-net size $|\mathcal X_t|$ for each round $t\in[T]$ to upper bound the difference between $f([x^*]_t)$ and $f(x^*)$  by $1/t^2$ with high probability. We now use $\delta/4$ from Lipschitzness and $\delta/4$ from Lemma \ref{gaussian_conc} to get that the event in this Lemma happens with probability at least $1-\delta/2$. We can also get an upper bound on $\beta_t'$ from  Lemma \ref{gaussian_conc} by substituting the value of  $|\mathcal X_t|$,
\begin{align*}
    2\log(4|\mathcal X_t|\pi_t/\delta) & = 2\log(4\pi_t/\delta) + 2\log(|\mathcal X_t|)\\
    & = 2\log(4\pi_t/\delta)+2\log\left(\left(dt^2br\sqrt{\log(4da/\delta)}\right)^d\right)\leq \beta_t.
\end{align*}
\end{proof}

\begin{lemma}[Instantaneous regret]\label{lem:instant_regret}
     Set $\beta_t = 2\log(4\pi_t/\delta)+4d\log(dtbr\sqrt{\log(4da/\delta)})$. Where $\sum_{t\in[T]}\pi_t^{-1}\leq1$ and $\pi_t>0$. For each $t\in[T]$, conditioned on $\mathcal{D}_{t-1}$, we have
    \[
    r_t \leq 2\sqrt{\beta_t}\sigma_t^{\text{PA}}(x_t) +\frac{1}{t^2},
    \]
    with probability at least $1-\delta$.
\end{lemma}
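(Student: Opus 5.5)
The argument is the standard GP-UCB one, with the prediction-augmented quantities $\mu_t^{\mathrm{PA}}$ and $\sigma_t^{\mathrm{PA}}$ in place of the vanilla posterior mean and standard deviation. Fix $t\in[T]$ and work conditionally on $\mathcal{D}_{t-1}$. We intersect two good events. The first comes from Lemma~\ref{lem:discrete}: it controls the discretized optimum $[x^*]_t$ through
\[
f(x^*)\le \mu_t^{\mathrm{PA}}([x^*]_t)+\sqrt{\beta_t}\,\sigma_t^{\mathrm{PA}}([x^*]_t)+\tfrac{1}{t^2}.
\]
The second controls the chosen point $x_t$ through
\[
|f(x_t)-\mu_t^{\mathrm{PA}}(x_t)|\le \sqrt{\beta_t}\,\sigma_t^{\mathrm{PA}}(x_t).
\]
For the second event we apply Lemma~\ref{gaussian_conc} with $\mathcal{X}_t=\{x_t\}$. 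This is legitimate because $x_t$ is a deterministic function of $\mathcal{D}_{t-1}$, the offline data, and the known $\varphi_t$. Lemma~\ref{prop:error_dist} then gives $f(x_t)-\mu_t^{\mathrm{PA}}(x_t)\mid\mathcal{D}_{t-1}\sim\mathcal{N}(0,(\sigma_t^{\mathrm{PA}}(x_t))^2)$. With a singleton set the required confidence parameter is $2\log(2\pi_t/\delta)$, which is at most $\beta_t$. Each event holds with probability at least $1-\delta/2$, so a union bound gives probability at least $1-\delta$.

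On the intersection the chain is short. Since $x_t$ maximizes $\varphi_t$ over all of $\mathcal X$, and in particular beats $[x^*]_t$,
\[
\mu_t^{\mathrm{PA}}([x^*]_t)+\sqrt{\beta_t}\sigma_t^{\mathrm{PA}}([x^*]_t)\le \varphi_t(x_t).
\]
Combining this with the first event gives $f(x^*)\le \varphi_t(x_t)+1/t^2$. The second event gives $f(x_t)\ge \mu_t^{\mathrm{PA}}(x_t)-\sqrt{\beta_t}\sigma_t^{\mathrm{PA}}(x_t)$. Subtracting yields
\[
r_t=f(x^*)-f(x_t)\le 2\sqrt{\beta_t}\,\sigma_t^{\mathrm{PA}}(x_t)+\tfrac{1}{t^2}.
\]

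The main obstacle is the conditioning subtlety. The Gaussian error law from Lemma~\ref{prop:error_dist} holds conditionally on the online data only, but $\mu_t^{\mathrm{PA}}$ also depends on the offline data through $\mu_t^{\mathrm{ML},\mathrm{all}}$. The algorithm's choice $x_t$ therefore also depends on the offline observations. We address this by noting that Lemma~\ref{lem:error_decomp} treats the offline residual as part of the random error $E_1$, and that the marginal (integrated over the offline data) is still centered Gaussian with variance $(\sigma_t^{\mathrm{PA}})^2$ at any fixed point. For the data-dependent point $x_t$ we would sidestep the issue by replacing the singleton with a union bound over the discretization $\mathcal{X}_t$ together with the Lipschitz property of Assumption~\ref{assumption:lip}. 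This is exactly how Lemma~\ref{lem:discrete} handles $x^*$, and it costs only the same $1/t^2$ and $\log|\mathcal{X}_t|$ terms already absorbed into $\beta_t$.

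If that route proves delicate, the fallback is the paper's convention: treat all posterior quantities as fixed given the information available at round $t$, and apply the concentration of Lemma~\ref{gaussian_conc} to $x_t$ directly.
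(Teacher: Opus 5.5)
Your main line is the paper's proof. It uses the same two events (Lemma~\ref{lem:discrete} for $x^*$, Lemma~\ref{gaussian_conc} at $x_t$), the same $\delta/2+\delta/2$ union bound, and the same maximality of $\varphi_t$. Your chain is actually cleaner than the paper's displayed one. The paper first writes $f(x_*)\le\varphi_t(x_*)$ and then passes to $\varphi_t([x_*]_t)+1/t^2$, a step that would need regularity of $\mu_t^{\mathrm{PA}}$ and $\sigma_t^{\mathrm{PA}}$. You instead read $f(x^*)\le\varphi_t([x^*]_t)+1/t^2$ directly off Lemma~\ref{lem:discrete}, which is what is actually available.

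The conditioning problem you raise is real, and the paper's proof passes over it. The point $x_t$ depends on $\mathcal{D}^{\text{off}}$ through $\mu_t^{\mathrm{ML},\text{all}}$, whereas Lemma~\ref{prop:error_dist} describes the error only at points that are fixed given $\mathcal{D}_{t-1}$.

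Your proposed sidestep does not repair this, and it costs more than the $1/t^2$ and $\log|\mathcal{X}_t|$ terms. Copying the template of Lemma~\ref{lem:discrete} at $x_t$ gives $|f(x_t)-\mu_t^{\mathrm{PA}}([x_t]_t)|\le\sqrt{\beta_t}\,\sigma_t^{\mathrm{PA}}([x_t]_t)+1/t^2$, with the posterior quantities evaluated at the grid point.
\begin{itemize}
\item For $x^*$ this is harmless, because the chain only needs $\varphi_t([x^*]_t)\le\varphi_t(x_t)$.
\item For $x_t$ you must cancel against $\varphi_t(x_t)$ itself. You would therefore need bounds on $|\mu_t^{\mathrm{PA}}(x_t)-\mu_t^{\mathrm{PA}}([x_t]_t)|$ and $|\sigma_t^{\mathrm{PA}}(x_t)-\sigma_t^{\mathrm{PA}}([x_t]_t)|$.
\item Assumption~\ref{assumption:lip} controls sample paths of $f$ and $f^{\mathrm{ML}}$, not these data-dependent posterior functions.
\item The downstream information-gain argument also needs $\sigma_t^{\mathrm{PA}}$ at the queried point, not at a nearby grid point.
\end{itemize}
A union bound over a fixed grid covers $x_t$ only if the algorithm's argmax is restricted to that grid, and that changes the algorithm. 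So your proposal, like the paper's proof, ultimately rests on your fallback convention.

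A repair that works for the algorithm as stated is to condition on $\mathcal{D}_{t-1}\cup\mathcal{D}^{\text{off}}$. Under this conditioning $x_t$ is deterministic, exactly as in the vanilla argument of \cite{Srinivas2010}. What you then need is $f(x)\mid\mathcal{D}_{t-1}\cup\mathcal{D}^{\text{off}}\sim\mathcal{N}\bigl(\mu_t^{\mathrm{PA}}(x),(\sigma_t^{\mathrm{PA}}(x))^2\bigr)$.

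Write $f(x)=\mu_t^{\mathrm{true}}(x)+c_t(x)\bigl(f^{\mathrm{ML}}(x)-\mu_t^{\mathrm{ML}}(x)\bigr)+E_2(x)$ as in Lemma~\ref{lem:error_decomp}, with $c_t(x)=\rho_t(x)\sigma_t^{\mathrm{true}}(x)/\sigma_t^{\mathrm{ML}}(x)$. The required law holds precisely when $E_2(x)$ is independent of the offline observations given $\mathcal{D}_{t-1}$. In that case $\mu_t^{\mathrm{PA}}(x)$ and $c_t(x)^2(\sigma_t^{\mathrm{ML},\text{all}}(x))^2+(1-\rho_t(x)^2)(\sigma_t^{\mathrm{true}}(x))^2$ are exactly the full posterior mean and variance of $f(x)$.

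This independence strengthens the $E_1\perp E_2$ claim in Lemma~\ref{lem:error_decomp}, which is only verified against $f^{\mathrm{ML}}(x)$. Establishing it is where the real work lies.
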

\begin{proof}
     A union bound over $\delta/2$ from Lemma \ref{lem:discrete} and $\delta/2$ Lemma \ref{gaussian_conc} gives that both events hold with probability greater than $1-\delta$. By definition of $x_t$ as the maximizer of $\varphi_t(x)$, we get that with probability at least $1-\delta$,
    \begin{align*}
        f(x_*)&\leq\mu_t^{\text{PA}}(x_*)+\sqrt{\beta_t}\sigma_t^{\text{PA}}(x_*)
        \\
        &\leq \mu_t^{\text{PA}}([x_*]_t)+\sqrt{\beta_t}\sigma_t^{\text{PA}}([x_*]_t) +\frac{1}{t^2}\\
        & \leq \mu_t^{\text{PA}}(x_t)+\sqrt{\beta_t}\sigma_t^{\text{PA}}(x_t) +\frac{1}{t^2}.
    \end{align*}
Note that we can apply Lemma \ref{gaussian_conc} because $\beta_t$ used here is bigger than $\beta_t'$ in Lemma \ref{gaussian_conc}. Thus,
\begin{align*}
    r_t &= f(x_*)-f(x_t)\\
    &\leq \mu_t^{\text{PA}}(x_t) - f(x_t)+\sqrt{\beta_t}\sigma_t^{\text{PA}}(x_t) +\frac{1}{t^2}\\
    &\leq 2\sqrt{\beta_t}\sigma_t^{\text{PA}}(x_t) +\frac{1}{t^2}.
\end{align*}

\end{proof}

\subsection{Proof of \cref{thm1}}
\begin{proof}[Proof of Theorem \ref{thm1}]

For this proof, we adopt the choice of $\beta_t$ from Lemma \ref{lem:instant_regret}. Since $\sum_{t\geq 1}\frac{1}{t^2}=\frac{\pi^2}{6}$, we  choose $\pi_t = \frac{\pi^2t^2}{6}$, which gives that $\sum_{t\in[T]}\pi_t^{-1} \leq \sum_{t\geq 1}\pi_t^{-1}=\frac{\pi^2}{6}\frac{6}{\pi^2}= 1$ as required. From Lemma \ref{lem:instant_regret}, we get that with probability at least $1-\delta$,
\begin{align*}
    R_T=\sum_{t\in[T]}r_t & = \sum_{t\in[T]}\left[2\sqrt{\beta_t}\sigma^{\text{PA}}_t(x_t) +\frac{1}{t^2}\right]\\
    &\leq 2\sqrt{ \left(\sum_{t\in T}\beta_t\right)\left( \sum_{t\in[T]}\left(\sigma^{\text{PA}}_t(x_t)\right)^2\right)} +\frac{\pi^2}{6} \leq 2\sqrt{T \beta_T \sum_{t\in[T]}\left(\sigma^{\text{PA}}_t(x_t)\right)^2} +\frac{\pi^2}{6}.
\end{align*}
The first inequality is due to the Cauchy Schwartz inequality, and the last inequality is because $\beta_t$ is non-decreasing in $t$.

The remainder of this proof focuses on upper bounding $ 4\sum_{t\in[T]}(\sigma_t^{\text{PA}}(x_t))^2$. Given the assumption in the Theorem that $(\sigma_t^{\mathrm{ML},\text{all}}(x)/\sigma_t^{\mathrm{ML}}(x))^2\leq R$, we get that $(\sigma^{\text{PA}}_t(x))^2\leq (1-(1-R)\rho_t^2(x))(\sigma_t^{\mathrm{true}}(x))^2$. Thus,
\begin{equation}\label{bound}
\sum_{t\in[T]}\left(\sigma^{\text{PA}}_t(x_t)\right)^2 \leq  \sum_{t\in[T]}
\left[1-(1-R)\rho_t^2(x_t)\right](\sigma_t^{\mathrm{true}}(x_t))^2.
\end{equation}
We perform a residual decomposition on $f$ in terms of two unconditionally independent Gaussian Process samples $f^{\mathrm{ML}}$ and $\xi$, with
\begin{equation}
\label{residual_decomp}
    f = \rho f^{\mathrm{ML}} + \sqrt{1-\rho^2} \xi \,,\quad \xi \sim \mathcal{GP}(0,K(x,x')).
\end{equation}

From Lemma \ref{lem:error_decomp}, we know that $(1-\rho_t^2(x))(\sigma_t^{\mathrm{true}}(x))^2 = \Var(f(x)\mid f^{\mathrm{ML}}(x),\mathcal{D}_{t-1})$,
combining with the residual decomposition (\ref{residual_decomp}),
\begin{align*}
   (1- \rho_t^2(x))(\sigma_t^{\mathrm{true}}(x))^2 &=  \Var(\rho f^{\mathrm{ML}}(x) + \sqrt{1-\rho^2} \xi(x) \mid f^{\mathrm{ML}}(x), \mathcal{D}_{t-1})\\
    & = (1-\rho^2) \Var(\xi(x)\mid f^{\mathrm{ML}}(x), \mathcal{D}_{t-1})\\
    & \leq (1-\rho^2) \Var(\xi(x)\mid  \mathcal{D}_{t-1}) = (1-\rho^2)(\sigma_t^{\xi}(x))^2.
\end{align*}
The last inequality is true because additionally conditioning on $f^{\mathrm{ML}}(x)$ can only decrease the posterior variance of $\xi$.

Rearranging this inequality gives
\begin{align}
\label{bound2}
    \sum_{t\in[T]}
\left[1-(1-R)\rho_t^2(x_t)\right](\sigma_t^{\mathrm{true}}(x_t))^2\leq \sum_{t\in[T]} (1-\rho^2)(1-R)(\sigma_t^{\xi}(x_t))^2 + R(\sigma_t^{\mathrm{true}}(x_t))^2.
\end{align}

Now recall the definition of information gain of a sample of a Gaussian Process with observation noise of variance $\sigma^2$ and kernel $k(\cdot,\cdot)$,
\begin{equation}
\label{eq:info-gain-definition}
    \gamma_T(k,\sigma^2)\triangleq \max_{A\subset \mathcal{X},\,|A|=T} \frac{1}{2}\log\det \left({I}+\sigma^{-2}K_A \right),
\end{equation}

where $K_A=[k(x,x')]_{x,x'\in A}$.

Note that by observing $y_t^s$ and $y_t^h$, we can also indirectly observe $z_t$ -- a noisy observation of $\xi(x_t)$, where
\[
z_t = \frac{y_t^h-\rho y_t^s}{\sqrt{1-\rho^2}} = \xi(x_t) + \frac{\varepsilon_t^h-\rho \varepsilon_t^s}{\sqrt{1-\rho^2}}.
\]
Thus, the noisy observation $z_t$ of $\xi(x_t)$ has variance $\frac{\eta^2 + \rho^2\eta_{\mathrm{ML}}^2}{1-\rho^2}$. 

A classical result from Lemma 5.4 in \cite{Srinivas2010} shows that for a sample $f$ of a Gaussian Process with kernel $k$ and observation noise of variance $\sigma^2$, the sum of posterior variances of $f$ after querying an arbitrary sequence of $T$ points $\{x_t\}_{t\in[T]}$ has the following property
\[
4\sum_{t\in[T]} (\sigma^f_t(x_t))^2 \leq C(\sigma^2) \gamma_T\left(k, \sigma^2 \right), \quad C(\sigma^2) = \frac{8}{\log(1+\sigma^{-2})}.
\]
Let $\tilde{\sigma}_t^{\mathrm{true}}(x)$ be the posterior standard deviation of $f$ under a single-output GP observing only the ground-truth oracle but not the prediction oracle. Since conditioning on extra data can only reduce the posterior variance, we get that
\[
\sigma_t^{\mathrm{true}}(x_t)\leq \tilde{\sigma}_t^{\mathrm{true}}(x_t) \quad \forall x\in \mathcal X, t\in[T].
\]
Applying this result and Lemma 5.4 in \cite{Srinivas2010} gives that 
\begin{equation}
\label{sum_of_sigma_f}
   4\sum_{t\in[T]}(\sigma_t^{\mathrm{true}}(x_t))^2 \leq 4\sum_{t\in[T]}(\tilde{\sigma}_t^{\mathrm{true}}(x_t))^2 \leq C(\eta^2) {\gamma_T\left(k,\eta^2\right)}\overset{(a)}{\leq}  C(\frac{\eta^2 + \rho^2\eta_{\mathrm{ML}}^2}{1-\rho^2})\underbrace{\gamma_T\left(k,\eta^2\right)}_{\triangleq\gamma_T}, 
\end{equation}
and 
\begin{equation}
\label{sum_of_sigma_xi}
4\sum_{t\in [T]}\sigma_t^{\xi}(x_t)^2 \leq C\left(\frac{\eta^2 + \rho^2\eta_{\mathrm{ML}}^2}{1-\rho^2}\right)\,\gamma_T\left(k,\frac{\eta^2 + \rho^2\eta_{\mathrm{ML}}^2}{1-\rho^2}\right) \overset{(b)}{\leq} C\left(\frac{\eta^2 + \rho^2\eta_{\mathrm{ML}}^2}{1-\rho^2}\right)\,{\gamma_T(k,\eta^2)}.
\end{equation}
where inequalities $(a)$ and $(b)$ are true because $\frac{\eta^2 + \rho^2\eta_{\mathrm{ML}}^2}{1-\rho^2} \geq \eta^2$ and that  $\gamma_T(k,\sigma^2)$ is nonincreasing in $\sigma$ when fixing $k$, and $C(\sigma^2)$ is increasing in $\sigma^2$.

Substituting (\ref{sum_of_sigma_f}) and (\ref{sum_of_sigma_xi}) into  (\ref{bound2}), and utilizing (\ref{bound}) gives that,
\begin{align*}
    4\sum_{t\in[T]}(\sigma_t^{\text{PA}}(x_t))^2 &\overset{{(\ref{bound})}}{\leq}    \sum_{t\in[T]}
\left[1-(1-R)\rho_t^2(x_t)\right](\sigma_t^{\mathrm{true}}(x_t))^2\\
&\overset{(\ref{bound2})}{\leq} \sum_{t\in[T]} (1-\rho^2)(1-R)(\sigma_t^{\xi}(x_t))^2 + R(\sigma_t^{\mathrm{true}}(x_t))^2\\
&\overset{(\ref{sum_of_sigma_f},\ref{sum_of_sigma_xi})}{\leq} 
    \underbrace{\frac{ 8}{\log \left(1+\left({(\eta^2 + \rho^2\eta_{\mathrm{ML}}^2)}/{(1-\rho^2)}\right)^{-1}\right)}}_{\triangleq C_1} \left[(1-(1-R)\rho^2 \right] \gamma_T.
\end{align*}
Hence, we have
\[
R_T \leq \sqrt{C_1  \beta_T T\left[(1-(1-R)\rho^2 \right] \gamma_T} +\frac{\pi^2}{6}.
\]

\end{proof}

\section{Useful Formulas}
\label{app:formulas}

Throughout, let $k(\cdot,\cdot)\equiv K(\cdot,\cdot)$ be the prior scalar kernel and
$B=\begin{bmatrix}1&\rho\\ \rho&1\end{bmatrix}$ the covariance matrix. The
bivariate kernel is
\[
K\big((x,i),(x',j)\big)=B_{ij}\,k(x,x'),\qquad i,j\in\{\mathrm{true},\mathrm{ML}\}.
\]
For any set of inputs $X=\{x_1,\dots,x_n\}$ define the Gram matrix
$K_{XX}\in\mathbb{R}^{n\times n}$ by $(K_{XX})_{ab}=k(x_a,x_b)$ and the
cross-kernel row $k_{xX}\in\mathbb{R}^{1\times n}$ by $(k_{xX})_a=k(x,x_a)$.

\vspace{4pt}
\paragraph{Online paired observations.}
At online round $t$, let $X_{t-1}=\{x_1,\dots,x_{t-1}\}$ and stack the paired
observations as
\[
y_{t-1}=
\begin{bmatrix}
y_{t-1}\\
y^{\mathrm{ML}}_{t-1}
\end{bmatrix}
=
\begin{bmatrix}
y(x_1)\\ \vdots\\ y(x_{t-1})\\
y^{\mathrm{ML}}(x_1)\\ \vdots\\ y^{\mathrm{ML}}(x_{t-1})
\end{bmatrix}
\in\mathbb{R}^{2(t-1)}.
\]
The corresponding noise covariance is
\[
\Sigma_{\text{on}}=I_{t-1}\otimes
\begin{bmatrix}
\eta^2&0\\0&\eta_{\mathrm{ML}}^2
\end{bmatrix}.
\]
Define the multi-task Gram matrix
\[
K_{t-1}^{\text{on}} \triangleq K_{X_{t-1}X_{t-1}}\otimes B
\in\mathbb{R}^{2(t-1)\times 2(t-1)}.
\]
For a test point $x$, define the cross-covariance
\[
K_{x}^{\text{on}} \triangleq k_{xX_{t-1}}\otimes B
\in\mathbb{R}^{2\times 2(t-1)},
\qquad
K_{xx}\triangleq k(x,x)\,B\in\mathbb{R}^{2\times 2}.
\]
Then the online-only posterior (used for $\mu_t^{\mathrm{true}},\mu_t^{\mathrm{ML}},\sigma_t^{\mathrm{true}},\sigma_t^{\mathrm{ML}}$
and $\rho_t$) is
\begin{align*}
\mu_t^{\text{on}}(x)
&=
\mathbb{E}\!\left[
\begin{bmatrix}f(x)\\ f^{\mathrm{ML}}(x)\end{bmatrix}\Bigm|\mathcal{D}_{t-1}
\right]
=
K_{x}^{\text{on}}
\left(K_{t-1}^{\text{on}}+\Sigma_{\text{on}}\right)^{-1}
y_{t-1},
\\
\boldsymbol{\Sigma}_t^{\text{on}}(x)
&=
\mathrm{Var}\!\left(
\begin{bmatrix}f(x)\\ f^{\mathrm{ML}}(x)\end{bmatrix}\Bigm|\mathcal{D}_{t-1}
\right)
=
K_{xx}
-
K_{x}^{\text{on}}
\left(K_{t-1}^{\text{on}}+\Sigma_{\text{on}}\right)^{-1}
\left(K_{x}^{\text{on}}\right)^{\!\top}.
\end{align*}
We extract
\[
\mu_t^{\mathrm{true}}(x)=[\mu_t^{\text{on}}(x)]_1,\quad
\mu_t^{\mathrm{ML}}(x)=[\mu_t^{\text{on}}(x)]_2,\quad
(\sigma_t^{\mathrm{true}}(x))^2=[\boldsymbol{\Sigma}_t^{\text{on}}(x)]_{11},\quad
(\sigma_t^{\mathrm{ML}}(x))^2=[\boldsymbol{\Sigma}_t^{\text{on}}(x)]_{22},
\]
and the posterior correlation
\[
\rho_t(x)=
\frac{[\boldsymbol{\Sigma}_t^{\text{on}}(x)]_{12}}
{\sigma_t^{\mathrm{true}}(x)\sigma_t^{\mathrm{ML}}(x)}.
\]

\vspace{6pt}
\paragraph{Offline ML-prediction-only observations.}
Let $X^{\text{off}}=\{x^{\text{off}}_1,\dots,x^{\text{off}}_{n_{\text{off}}}\}$
and $y^{\text{off}}\in\mathbb{R}^{n_{\text{off}}}$ collect
$y^{\mathrm{ML}}(x^{\text{off}}_i)$.
Define
\[
K_{\text{off}}\triangleq K_{X^{\text{off}}X^{\text{off}}}\in\mathbb{R}^{n_{\text{off}}\times n_{\text{off}}},
\qquad
k_{x,\text{off}}\triangleq k_{xX^{\text{off}}}\in\mathbb{R}^{1\times n_{\text{off}}}.
\]
Since the prediction marginal has covariance $k(\cdot,\cdot)$, the offline
posterior for $f^{\mathrm{ML}}$ is the standard GP regression update:
\begin{align*}
\mu^{\text{off}}(x)
&\triangleq\mathbb{E}[f^{\mathrm{ML}}(x)\mid \mathcal{D}^{\text{off}}]
=
k_{x,\text{off}}
\left(K_{\text{off}}+\eta_{\mathrm{ML}}^2 I_{n_{\text{off}}}\right)^{-1}
y^{\text{off}},
\\
(\sigma^{\text{off}}(x))^2
&\triangleq\mathrm{Var}(f^{\mathrm{ML}}(x)\mid \mathcal{D}^{\text{off}})
=
k(x,x)
-
k_{x,\text{off}}
\left(K_{\text{off}}+\eta_{\mathrm{ML}}^2 I_{n_{\text{off}}}\right)^{-1}
k_{x,\text{off}}^{\top}.
\end{align*}
(If the offline stage averages $N$ duplicate prediction queries at each grid
point, replace $\eta_{\mathrm{ML}}^2$ by $\eta_{\mathrm{ML}}^2/N$ in the above.)

\vspace{6pt}
\paragraph{Augmented posterior using offline$+$online data.}
For the augmented model $\mathcal{GP}_{\mathrm{true},\mathrm{ML}_{\mathrm{all}}}$ conditioned on
$\mathcal{D}^{\text{off}}\cup\mathcal{D}_{t-1}$, stack the observations as
\[
y_{t-1}^{\text{all}}=
\begin{bmatrix}
y^{\text{off}}\\
y_{t-1}\\
y^{\mathrm{ML}}_{t-1}
\end{bmatrix}
\in\mathbb{R}^{n_{\text{off}}+2(t-1)}.
\]
The joint covariance of $y_{t-1}^{\text{all}}$ (before adding noise)
is the block matrix
\[
K_{t-1}^{\text{all}}=
\begin{bmatrix}
K_{\text{off}} & \rho K_{\text{off,on}} & K_{\text{off,on}}\\
\rho K_{\text{on,off}} & K_{\text{on}}\; & \rho K_{\text{on}}\\
K_{\text{on,off}} & \rho K_{\text{on}} & K_{\text{on}}
\end{bmatrix},
\]
where $K_{\text{on}}\triangleq K_{X_{t-1}X_{t-1}}$ and
$K_{\text{off,on}}\triangleq K_{X^{\text{off}}X_{t-1}}$ (so $K_{\text{on,off}}=K_{\text{off,on}}^{\top}$).
The corresponding noise covariance is
\[
\Sigma_{\text{all}}
=
\mathrm{diag}\!\left(\eta_{\mathrm{ML}}^2 I_{n_{\text{off}}},\, \eta^2 I_{t-1},\, \eta_{\mathrm{ML}}^2 I_{t-1}\right).
\]
For a test point $x$, the cross-covariance between
$\big(f(x),f^{\mathrm{ML}}(x)\big)$ and $y_{t-1}^{\text{all}}$ is
\[
K_{x}^{\text{all}}
=
\begin{bmatrix}
\rho\,k_{x,\text{off}} & k_{xX_{t-1}} & \rho\,k_{xX_{t-1}}\\
k_{x,\text{off}} & \rho\,k_{xX_{t-1}} & k_{xX_{t-1}}
\end{bmatrix}
\in\mathbb{R}^{2\times (n_{\text{off}}+2(t-1))}.
\]
Thus the augmented posterior is
\begin{align*}
\mu_t^{\text{all}}(x)
&=
K_{x}^{\text{all}}
\left(K_{t-1}^{\text{all}}+\Sigma_{\text{all}}\right)^{-1}
y_{t-1}^{\text{all}},
\\
\boldsymbol{\Sigma}_t^{\text{all}}(x)
&=
K_{xx}
-
K_{x}^{\text{all}}
\left(K_{t-1}^{\text{all}}+\Sigma_{\text{all}}\right)^{-1}
\left(K_{x}^{\text{all}}\right)^{\!\top}.
\end{align*}
We extract the prediction quantities used in PA-GP-UCB as
\[
\mu_t^{\mathrm{ML},\text{all}}(x)=[\mu_t^{\text{all}}(x)]_2,
\qquad
(\sigma_t^{\mathrm{ML},\text{all}}(x))^2=[\boldsymbol{\Sigma}_t^{\text{all}}(x)]_{22}.
\]

\section{Additional Numerical Analysis}
\subsection{Posterior Visualizations}
\label{addpost}
\begin{figure*}[t]
  \centering
  \vspace{0.5em}
\begin{overpic}[width=0.86\linewidth]{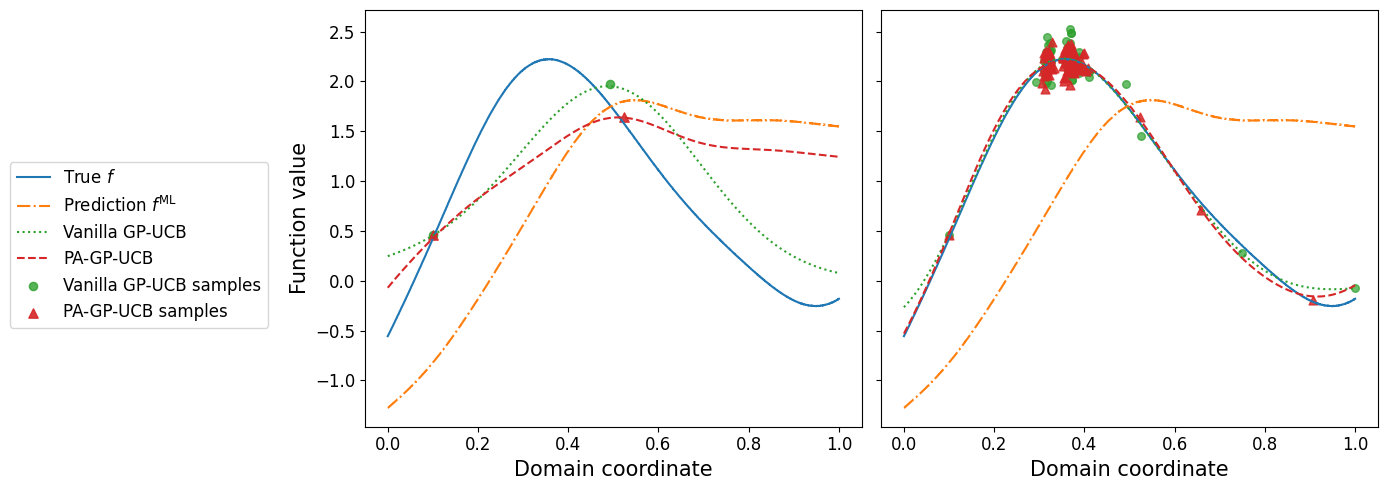}
    \put(43,36){\small\textit{(a)}}
    \put(80,36){\small\textit{(b)}}
\end{overpic}

\caption{
(a) Posterior means of Vanilla GP-UCB and PA-GP-UCB at 
$T = 2$ compared with the true function and the predicted function under 
correlation $\rho = 0.8$. 
(b) The corresponding posterior means at $T = 200$.
}

\label{fig:exp11}

\end{figure*}

Figure~\ref{fig:exp11} illustrates the evolution of the posterior under PA-GP-UCB and
Vanilla GP-UCB.
At $T=1$, both methods are initialized at a random location.
At $T=2$, PA-GP-UCB exploits information from the offline prediction
$f^{\mathrm{ML}}$, and its posterior closely resembles the shape of the predicted
function, placing its first query near the predicted optimum, even though the
prediction's optimum differs from that of the true objective.
As more human feedback is incorporated, the multi-task correction progressively
adjusts the posterior toward the true function.
By $T=200$, the posterior under PA-GP-UCB closely matches the ground-truth function
and concentrates around the true optimum, yielding an accurate reconstruction of the ground-truth function.
In contrast, Vanilla GP-UCB relies solely on sparse human feedback, resulting in
limited early guidance and slower convergence.

\subsection{Ablations on Noise Robustness}
\label{moreablations}

Figure~\ref{fig:exp122} reports additional ablations on observation noise
and prediction noise. Figure~\ref{fig:exp122}(a) highlights robustness to observation noise in the true reward function.
While reducing prediction noise improves performance overall, the regret gap
$R_T^{\mathrm{Vanilla}} - R_T$ shows that PA-GP-UCB yields larger gains
in high-noise regimes, where reliance on expensive human feedback alone is particularly
inefficient. Figure~\ref{fig:exp122}(b) varies the prediction noise variance $\eta_{\mathrm{ML}}^2$
with $M = N = 1000$.
PA-GP-UCB consistently outperforms Vanilla GP-UCB across all noise levels, achieving
larger improvements as prediction noise decreases, while remaining robust even under
highly noisy predictions.

\begin{figure*}[t]
  \centering
\begin{subfigure}{0.32\linewidth}
  \centering
  \caption{}
  \includegraphics[width=\linewidth]{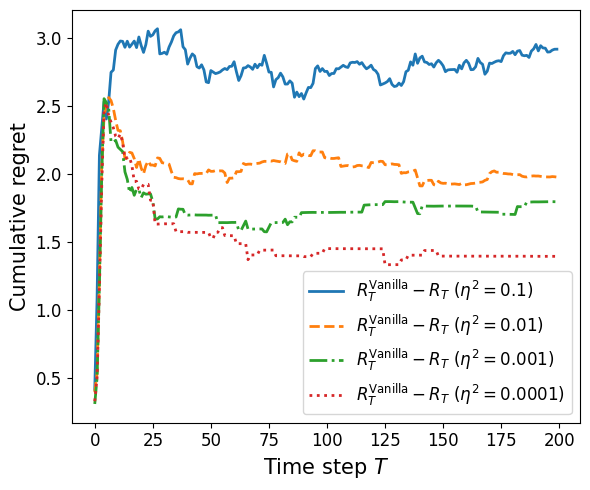}
\end{subfigure}
\hspace{0.04\linewidth}
\begin{subfigure}{0.32\linewidth}
  \centering
  \caption{}
  \includegraphics[width=\linewidth]{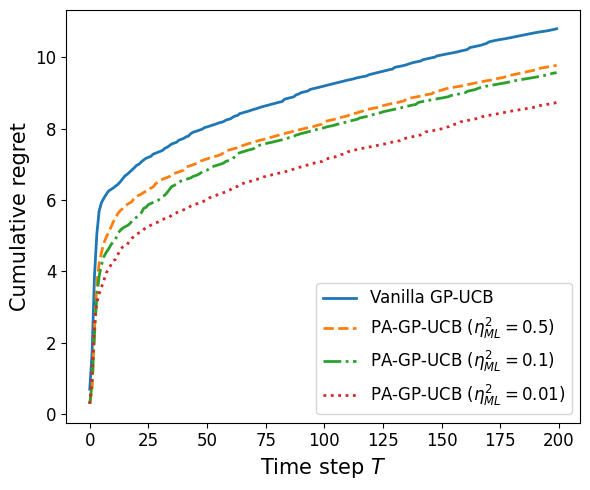}
\end{subfigure}

  \caption{
  Cumulative regret of Vanilla GP-UCB and PA-GP-UCB averaged over $50$ runs with horizon $T=200$.
  (a) Regret gap $R_T^{\mathrm{Vanilla}}-R_T$ for
$\eta^2\in\{0.1,0.01,0.001,0.0001\}$ with $M=N=1000$ and
$\eta_{\mathrm{ML}}^2=0.01$. (b) Effect of prediction noise, varying $\eta_{\mathrm{ML}}^2 \in \{0.5, 0.1, 0.01\}$,
with $M = N = 1000$ and $\eta^2 = 0.01$. }
  \label{fig:exp122}
\end{figure*}

\section{LLM prediction prompting templates}
\label{app:llm-prompts}

This appendix lists the exact prompt templates used to query the LLM 
for predicting average weekly visits. In all cases, we request a single JSON
number and use one deterministic query per arm. 
\subsection{$K$-shot calibrated predictions ( $K=4$ or $K=12$)}
\label{app:prompt-kshot}

\begin{verbatim}
You are predicting average weekly visits.

Here are example experimental conditions with their observed average visits:

1) "{cond_1}" -> {y_1:.2f}
2) "{cond_2}" -> {y_2:.2f}
...
K) "{cond_K}" -> {y_K:.2f}

Now predict the expected average weekly visits for:

"{arm_text}"

Return ONLY JSON:
{"pred_visits": number}
\end{verbatim}

\subsection{Scale-only predictions (min/max/mean only)}
\label{app:prompt-scale-only}

\begin{verbatim}
You are predicting average weekly visits.

Here is global context about the experiment:

- Visits are non-negative real numbers.
- Across this study, values typically range from {y_min:.1f} to {y_max:.1f}.
- The average condition receives around {y_mean:.1f} visits.
- Most conditions fall near the middle of this range.

Now predict the expected average weekly visits for:

"{arm_text}"

Return ONLY JSON:
{"pred_visits": number}
\end{verbatim}

\subsection{Notes on filling the template}
\label{app:prompt-notes}

In the scale-only prompt, $\{y_{\min}, y_{\max}, y_{\mathrm{mean}}\}$ are computed from the set of
ground-truth arm-level outcomes used for calibration and evaluation, and
$\texttt{arm\_text}$ is the natural-language description of the arm. In the $K$-shot
prompt, the pairs $\{( \texttt{cond}_j, y_j )\}_{j=1}^K$ are sampled from the available
arm-level ground truth, with a fixed random seed for reproducibility.


\end{document}